\documentclass[letterpaper]{article} 

\newif\ifshowappendix
\showappendixtrue

\usepackage{aaai25}  
\usepackage{times}  
\usepackage{helvet}  
\usepackage{courier}  
\usepackage[hyphens]{url}  
\usepackage{graphicx} 
\urlstyle{rm} 
\usepackage{natbib}  
\usepackage{caption} 
\frenchspacing  
\setlength{\pdfpagewidth}{8.5in} 
\setlength{\pdfpageheight}{11in} 
%
\pdfinfo{
/TemplateVersion (2025.1)
}

\setcounter{secnumdepth}{1} 

%

\usepackage{complexity}
\usepackage{bm}         
\usepackage{amssymb}
\usepackage{amsthm}
\usepackage{amsmath}
\usepackage{enumitem}
\usepackage{caption}    
\usepackage{subcaption} 
\usepackage{centernot}
\usepackage{complexity}
\usepackage{mathrsfs}   
\usepackage{tabularx}   
\usepackage{multirow}   
\usepackage{array}      
\usepackage{booktabs}   
\usepackage{colortbl}   
\usepackage{xargs}      
\usepackage{tikz}
\usepackage{tikz-cd}
\usepackage{url}
\usepackage{makecell}   
\usepackage{float}
\usepackage{etoc}
\usepackage[ruled,noline,linesnumbered,noend]{algorithm2e} 
\usepackage{lipsum}
\usepackage{float}
\usepackage{adjustbox}
\usepackage{placeins}
\usepackage{listings}
\usepackage{cleveref}
\usepackage{threeparttable}

\usetikzlibrary{calc,positioning}


\newtheorem{theorem}{Theorem}[section]

\newtheorem{lemma}[theorem]{Lemma}
\newtheorem{proposition}[theorem]{Proposition}
\theoremstyle{definition}
\newtheorem{example}[theorem]{Example}
\theoremstyle{definition}
\newtheorem{definition}[theorem]{Definition}

\SetInd{0.5em}{0em}

\newcolumntype{Y}{>{\raggedleft\arraybackslash}X}

\newcommand{\zerocell}[1]{-}

\newcommand{\bestbest}[1]{\cellcolor{blue!30}{#1}}
\newcommand{\poss}[1]{\cellcolor{green!30}{#1}}
\newcommand{\negg}[1]{\cellcolor{red!30}{#1}}
\newcommand{\possA}[1]{\poss{#1}}

\definecolor{caribbeangreen}{rgb}{0.0, 0.8, 0.6}
\definecolor{brilliantlavender}{rgb}{0.96, 0.73, 1.0}
\definecolor{amethyst}{rgb}{0.6, 0.4, 0.8}
\definecolor{ao(english)}{rgb}{0.0, 0.5, 0.0}
\definecolor{arylideyellow}{rgb}{0.91, 0.84, 0.42}
\definecolor{asparagus}{rgb}{0.53, 0.66, 0.42}
\definecolor{aquamarine}{rgb}{0.5, 1.0, 0.83}
\definecolor{babyblue}{rgb}{0.54, 0.81, 0.94}
\definecolor{fwtchanged}{rgb}{0.3, 0.3, 0.7}
\definecolor{rosewood}{rgb}{0.4, 0.0, 0.04}
\definecolor{oldmauve}{rgb}{0.4, 0.19, 0.28}
\definecolor{myrtle}{rgb}{0.13, 0.26, 0.12}
\definecolor{magenta(dye)}{rgb}{0.79, 0.08, 0.48}

\definecolor{plta}{rgb}{0.12, 0.47, 0.71}
\definecolor{pltb}{rgb}{   1, 0.5, 0.05}
\definecolor{pltc}{rgb}{0.17, 0.63, 0.17}
\definecolor{pltd}{rgb}{0.84, 0.15, 0.16}



\newcommand{\colourdc}{blue}
  
\setlength\marginparwidth{25mm}
\newcommand{\todocustom}[3]{\todo[linecolor=#2,backgroundcolor=#2!25,bordercolor=#2,size=\tiny,#3]{#1}}  
\newcommandx{\nbdc}[2][1=]{\todocustom{#2}{\colourdc}{#1}}


\def\N{\mathbb{N}}
\def\R{\mathbb{R}}

\renewcommand{\phi}{\varphi}

\def\la{\leftarrow}

\newcommand{\abs}[1]{\left| #1 \right|}

\newcommand{\gen}[1]{\left< #1 \right>}

\newcommand{\set}[1]{\left\{ #1 \right\}}
\newcommand{\seta}[1]{\{ #1 \}}

\newcommand{\mseta}[1]{\{ \!\! \{ #1 \} \!\! \}}

\newcommand{\lr}[1]{\left( #1 \right)}

\newcommand{\range}[1]{\left[\!\left[ #1 \right]\!\right]}

\newcommand{\problem}{\mathbf{P}}
\newcommand{\objects}{\mathcal{O}}
\newcommand{\predicates}{\mathcal{P}}

\newcommand{\schemata}{\mathcal{A}}
\newcommand{\goal}{\mathcal{G}}
\newcommand{\objs}{\mathbf{o}}

\newcommand{\arity}{\mathrm{ar}}

\newcommand{\hash}{\textsc{hash}}
\newcommand{\graphFont}[1]{\mathbf{#1}}
\newcommand{\neighbour}{\graphFont{N}}
\newcommand{\featCat}{\graphFont{F}}

\newcommand{\featEdge}{\graphFont{L}}
\newcommand{\graph}{\graphFont{G}}
\newcommand{\nodes}{\graphFont{V}}
\newcommand{\edges}{\graphFont{E}}

\newcommand{\nodeCat}{\Sigma_{\text{V}}}
\newcommand{\edgeCat}{\Sigma_{\text{E}}}


\newcommand{\multiset}{\textbf{M}}

\newcommand{\qb}{\mathrm{QB}}
\def\neww{\text{new}}
\def\oldd{\text{old}}
\def\new{F_{\neww}}
\def\old{F_{\oldd}}
\newcommand{\WL}{\mathrm{WL}}
\newcommand{\AT}{\mathrm{AT}}
\newcommand{\plannerFont}[1]{$\texttt{#1}$}
\newcommand{\atoms}{\plannerFont{at}}
\newcommand{\wl}{\plannerFont{wl}}
\newcommand{\atomswl}{\plannerFont{at;wl}}
\newcommand{\features}{\mathcal{F}}
\newcommand{\graphRep}{\mathcal{G}}
\newcommand{\symm}{\sim_U}
\newcommand{\hff}{$h^{\mathrm{ff}}$}
\newcommand{\hadd}{$h^{\mathrm{add}}$}
\newcommand{\hgc}{$h^{\mathrm{gc}}$}

\nocopyright

\title{Symmetry-Invariant Novelty Heuristics\\via Unsupervised Weisfeiler-Leman Features}
\author{
  Dillon Z. Chen
}
\affiliations{
  LAAS-CNRS, University of Toulouse
}

\begin{document}

\maketitle

\begin{abstract}
  Novelty heuristics aid heuristic search by exploring states that exhibit novel atoms. However, novelty heuristics are not symmetry invariant and hence may sometimes lead to redundant exploration. In this preliminary report, we propose to use Weisfeiler-Leman Features for planning (WLFs) in place of atoms for detecting novelty. WLFs are recently introduced features for learning domain-dependent heuristics for generalised planning problems. We explore an unsupervised usage of WLFs for synthesising lifted, domain-independent novelty heuristics that are invariant to symmetric states. Experiments on the classical International Planning Competition and Hard To Ground benchmark suites yield promising results for novelty heuristics synthesised from WLFs.
\end{abstract}

%

\section{Introduction}
Novelty heuristics~\cite{lipovetzky.geffner.aaai2017,katz.etal.icaps2017}, which originated from the notion of width for propositional planning~\cite{lipovetzky.geffner.ecai2012}, are a simple yet powerful domain-independent technique for enhancing any given heuristic by helping guide exploration in search.
Novelty heuristics have later been extended to handle numeric planning~\cite{ramirez.etal.aamas2018,teichteilkoenigsbuch.etal.ijcai2020,chen.thiebaux.socs2024}, lifted planning~\cite{correa.seipp.icaps2022}, to guide policy search in generalised planning~\cite{lei.etal.socs2023}, and enhancing the LAMA planner~\cite{correa.seipp.c2024}.

Weisfeiler-Leman Features for planning (WLFs) are a recently introduced, efficient-to-compute feature generator for planning problems~\cite{chen.etal.icaps2024}.
We introduce the usage of WLFs for generating domain-independent novelty heuristics, which we name \emph{Weisfeiler-Leman (WL) novelty heuristics}.
We do so by generalising the quantified-both novelty heuristic framework introduced by \citet{katz.etal.icaps2017} to support arbitrary planning feature generators.
We show that WL novelty heuristics are symmetry-invariant, a first of its kind for novelty heuristics.
Preliminary experimental evaluation on the International Planning Competition and Hard To Ground benchmark suites show promising results for WL novelty heuristics.

\section{Background}
\label{sec:background}
This section formalises lifted STRIPS problems, the Weisfeiler-Leman Features for planning, and Quantified Both novelty heuristics.
Let $\range{n}$ denote the set of integers $\set{1,\ldots,n}$ and $\abs{S}$ the size of a set $S$.
The minimum of an empty set is $\infty$.

\paragraph{Planning Problem and Lifted Representation}
A classical planning problem is a deterministic state transition model~\cite{geffner.bonet.2013} given by a tuple $\problem = \gen{S, A, s_0, G}$ where $S$ is a set of states, $A$ a set of actions, $s_0 \in S$ an initial state, and $G \subseteq S$ a set of goal states.
Each action $a \in A$ is a function $a: S \rightarrow S \cup \set{\bot}$ where $a(s) = \bot$ if $a$ is not applicable in $s$, and $a(s) \in S$ is the successor state when $a$ is applied to $s$.
A solution for a planning problem is a plan: a sequence of actions $\pi = a_1, \ldots, a_n$ where $s_i = a_i(s_{i-1}) \not= \bot$ for $i \in \range{n}$ and $s_n \in G$.
A state $s$ in a planning problem $\problem$ induces a new planning problem $\problem' = \gen{S, A, s, G}$.
A planning problem is solvable if there exists at least one plan.
Satisficing planning refers to the task of finding any plan for a planning problem if it exists.

Planning problems are often compactly formalised in a lifted representation using predicate logic, such as via PDDL~\cite{ghallab.etal.1998,haslum.etal.2019}.
More specifically, a \emph{lifted (STRIPS) planning problem} is a tuple $\problem = \gen{\objects, \predicates, \schemata, s_0, \goal}$, where $\objects$ denotes a set of objects, $\predicates$ a set of predicate symbols, $\schemata$ a set of action schemata, $s_0$ the initial state, and $\goal$ now the goal condition.
Understanding of the transition system induced by $\schemata$ is not necessary for this paper, as the proposed novelty heuristics are applicable when the action model is not known~\cite{frances.etal.ijcai2017}.
We next focus on state and goal condition representations.

Each symbol $P \in \predicates$ is associated with an arity $\arity(P)\in\N \cup \set{0}$.
Predicates take the form $P(x_1,\ldots,x_{\arity(P)})$, where the $x_i$s denote their arguments.
Atoms are defined by substituting objects into predicate arguments, e.g. $p = P(o_1, \ldots, o_{\arity(P)})$.
More specifically, given $P \in \predicates$, and a tuple of objects $\objs = \gen{\objs_1,\ldots,\objs_{\arity(P)}}$, we denote $P(\objs)$ as the atom defined by substituting $\objs$ into arguments of $P$.
A state $s$ is a set of atoms.
The goal condition $\goal$ also consists of a set of atoms, and a state $s$ is a goal state if $s \supseteq \goal$.

We introduce the notational shorthand $\problem[\omega]$ as the $\omega$ component of a problem $\problem$; e.g. $\problem[s_0]$ is the initial state of $\problem$; and given a state $s$, let $\problem_s = \gen{\problem[\objects], \problem[\predicates], \problem[\schemata], s, \problem[\goal]}$.

\begin{figure}[t]
  \centering
  \newcommand{\segmentWidth}{1.6cm}
  \newcommand{\segmentHeight}{2.5cm}
  \newcommand{\segmentShift}{3.5125cm}
  \newcommand{\trainColour}{cyan}
\newcommand{\objdiff}{0.6}
\newcommand{\factdiff}{0.6}
\newcommand{\xsc}{1.2}
\newcommand{\xscb}{1.7}
\newcommand{\ysc}{2.3}
\newcommand{\objcolour}{babyblue!30}
\newcommand{\fmt}[1]{$\texttt{#1}$}
\newcommand{\nilgTikz}{
    \large
    \node[draw, circle, fill=\objcolour] (A) at
    (\ysc,+2*\factdiff*\xscb) {\fmt{A}};
    \node[draw, circle, fill=\objcolour] (B) at
    (\ysc,+0*\factdiff*\xscb) {\fmt{B}};
    \node[draw, circle, fill=\objcolour] (C) at
    (\ysc,-2*\factdiff*\xscb) {\fmt{C}};
    \node[draw, rectangle, fill=caribbeangreen!80] (a) at
    (0,3*\factdiff*\xsc) {\fmt{onTable(B)}};
    \node[draw, rectangle, fill=caribbeangreen!80] (b) at
    (0,1.5*\factdiff*\xsc) {\fmt{on(C,B)}};
    \node[draw, rectangle, fill=yellow!80] (c) at
    (0,0*\factdiff*\xsc) {\fmt{on(B,A)}};
    \node[draw, rectangle, fill=yellow!80] (d) at
    (0,-1.5*\factdiff*\xsc) {\fmt{onTable(C)}};
    \node[draw, rounded corners=1.5, fill=red!30] (e) at
    (0,-3*\factdiff*\xsc) {\fmt{onTable(A)}};

    \path [-,draw=plta] (B.west) edge (a.east);

    \path [-,draw=plta] (C.west) edge (b.east);
    \path [-,draw=pltb] (B.west) edge (b.east);

    \path [-,draw=plta] (B.west) edge (c.east);
    \path [-,draw=pltb] (A.west) edge (c.east);

    \path [-,draw=plta] (C.west) edge (d.east);

    \path [-,draw=plta] (A.west) edge (e.east);

    \node[draw, circle, fill=\objcolour] (A) at
    (\ysc,+2*\factdiff*\xscb) {\fmt{A}};
    \node[draw, circle, fill=\objcolour] (B) at
    (\ysc,+0*\factdiff*\xscb) {\fmt{B}};
    \node[draw, circle, fill=\objcolour] (C) at
    (\ysc,-2*\factdiff*\xscb) {\fmt{C}};
    \node[draw, rectangle, fill=caribbeangreen!80] (a) at
    (0,3*\factdiff*\xsc) {\fmt{onTable(B)}};
    \node[draw, rectangle, fill=caribbeangreen!80] (b) at
    (0,1.5*\factdiff*\xsc) {\fmt{on(C,B)}};
    \node[draw, rectangle, fill=yellow!80] (c) at
    (0,0*\factdiff*\xsc) {\fmt{on(B,A)}};
    \node[draw, rectangle, fill=yellow!80] (d) at
    (0,-1.5*\factdiff*\xsc) {\fmt{onTable(C)}};
    \node[draw, rounded corners=1.5, fill=red!30] (e) at
    (0,-3*\factdiff*\xsc) {\fmt{onTable(A)}};
}
\newcommand{\bwXshift}{0cm}
\newcommand{\bwYshift}{-0.4cm}
\newcommand{\blocksfontsize}{\scriptsize}
\newcommand{\blocksize}{0.25}
\newcommand{\graphScale}{0.375}
\newcommand{\vecSize}{0.25}
\newcommand{\midShift}{0.25cm}
\begin{tikzpicture}
    \tikzset{
        outer sep=0pt,
        model/.style={
                draw=\trainColour,
                rectangle,
                minimum width=1.5cm,
                minimum height=0.5cm,
                font=\scriptsize,
            },
        heuristic/.style={
                font=\scriptsize,
            },
        vectorSquare/.style={
                draw,
                rectangle,
                minimum width=\vecSize,
                minimum height=\vecSize,
                text width=\vecSize,
                text height=\vecSize,
                text depth=0,
            },
    }
    \node[draw=black, rectangle, rounded corners, minimum width=\segmentWidth, minimum height=\segmentHeight]
    (blocks) at (\bwXshift+0.625cm,0) {};
    \begin{scope}[xshift=\bwXshift, yshift= 2*\blocksize cm+\bwYshift]
        \node at (0, 3.75*\blocksize) {\tiny$\problem$};
        \draw (1*\blocksize,0) rectangle (2*\blocksize,1*\blocksize) node[midway] {\blocksfontsize$\texttt{A}$};
        \draw (3*\blocksize,0) rectangle (4*\blocksize,1*\blocksize) node[midway] {\blocksfontsize$\texttt{B}$};
        \draw (3*\blocksize,1*\blocksize) rectangle (4*\blocksize,2*\blocksize) node[midway] {\blocksfontsize$\texttt{C}$};
        \draw (2.5*\blocksize,3*\blocksize) node[align=center] {\tiny initial state};

        \draw(-0.25*\blocksize, 0) rectangle (5.25*\blocksize, -0.125*\blocksize) node[midway,below=0.1*\blocksize] {};
    \end{scope}
    \begin{scope}[xshift=\bwXshift, yshift=-2*\blocksize cm+\bwYshift]
        \draw (1*\blocksize,0*\blocksize) rectangle (2*\blocksize,1*\blocksize) node[midway] {\blocksfontsize$\texttt{A}$};
        \draw (1*\blocksize,1*\blocksize) rectangle (2*\blocksize,2*\blocksize) node[midway] {\blocksfontsize$\texttt{B}$};
        \draw (3*\blocksize,0*\blocksize) rectangle (4*\blocksize,1*\blocksize) node[midway] {\blocksfontsize$\texttt{C}$};
        \draw (2.5*\blocksize,3*\blocksize) node[align=center] {\tiny goal condition};

        \draw(-0.25*\blocksize, 0) rectangle (5.25*\blocksize, -0.125*\blocksize) node[midway,below=0.1*\blocksize] {};
    \end{scope}
    \node[draw=black, rectangle, rounded corners, minimum width=\segmentWidth, minimum height=\segmentHeight]
    (graph) at (\segmentShift + 0.425cm,0) {};
    \begin{scope}[yshift=-0.1cm, scale=\graphScale, every node/.append style={transform shape}, xshift=1/\graphScale*\segmentShift*1.05]
        \nilgTikz
        \node at (-1.05, 3) {\huge$\graph$};
    \end{scope}
    \node[draw=black, rectangle, rounded corners, minimum width=\segmentWidth, minimum height=\segmentHeight]
    (vector) at (2*\segmentShift + 0.12cm + \midShift, 0) {};

    \def\limer{0.05}
    \begin{scope}[yshift=0, xshift=2.025*\segmentShift + \midShift]
        \node at (-0.55,1) {\scriptsize$\multiset$};
        \begin{scope}[yshift=-0.5]
            \node at (\limer,3*\vecSize) {$(1, 4)$};
            \node at (\limer,1*\vecSize) {$(2, 7)$};
            \node at (\limer,-1*\vecSize) {$(3, 6)$};
            \node at (\limer,-3*\vecSize) {$(4, 2)$};
            \node at (1.5*\limer,-4.5*\vecSize) {\ldots};
        \end{scope}
    \end{scope}

    \newcommand{\aboveShift}{0}
    \draw[->,black] (blocks) -- (graph)     node[midway,above=\aboveShift,align=center,font=\scriptsize] {Graph\\Representation};
    \draw[->,black] (graph) -- (vector)     node[midway,above=\aboveShift,align=center,font=\scriptsize] {\phantom{p}Weisfeiler-Leman\phantom{p}\\\phantom{p}Algorithm\phantom{p}};

\end{tikzpicture}
  \caption{The pipeline for generating WL Features for a Blocksworld problem (\texttt{clear} atoms omitted).}
  \label{fig:wlplan}
\end{figure}

\begin{algorithm}[t]
  \caption{The Weisfeiler-Leman Algorithm}\label{alg:wl}
  \KwData{A graph $\graph = \gen{\nodes, \edges, \featCat, \featEdge}$, injective $\hash$ function, and number of iterations $L$.}  
\KwResult{Multiset of colours $\multiset$.}  
$c^{0}(v) \la \featCat(v), \forall v \in \nodes$ \label{line:wl:init}\\
\For{$l=1,\ldots,L$ \normalfont{\textbf{do for}} $v \in \nodes$}{ \label{line:wl:update1}
    $c^{l}(v) \la 
    \hash
    \lr{c^{l-1}(v), 
    \bigcup_{\iota\in\edgeCat}\mseta{(c^{l-1}(u), \iota) \mid u \in \neighbour_{\iota}(v)}}
    $ \label{line:wl:update2}
} 
\Return{$\bigcup_{l=0,\ldots,L}\mseta{c^{l}(v) \mid v \in \nodes}$} \label{line:wl:return}

\end{algorithm}

\paragraph{Weisfeiler-Leman Features for Planning}\label{ssec:graph}
Weisfeiler-Leman Features (WLFs) are introduced as state-centric feature generators for planning problems for use with learning heuristics for search~\cite{chen.etal.icaps2024}.
WLFs have been extended to handle numeric planning problems~\cite{chen.thiebaux.neurips2024}, probabilistic planning problems~\cite{zhang.2024}, and for learning action set heuristics~\cite{wang.trevizan.icaps2025}.
As summarised in \Cref{fig:wlplan}, WLFs are generated via a 2 step process, consisting of (1) transforming a planning problem $\problem$ into a graph with edge labels\footnote{Also known as `relational structures' in other communities.} $\graph$, and (2) running the Weisfeiler-Leman (WL) algorithm on the graph to generate a set of features $\multiset$ that takes the form of a set of tuples $(f_i, v_i) \in \N \times \N$.

\def\ob{\texttt{ob}}
\def\ag{\texttt{ag}}
\def\ug{\texttt{ug}}
\def\ap{\texttt{ap}}

Step (1) may use any graph representation of the planning problem.
Formally, we denote a graph with categorical node features and edge labels by a tuple $\graph = \gen{\nodes, \edges, \featCat, \featEdge}$ where $\nodes$ is a set of nodes, $\edges \subseteq \nodes \times \nodes$ a set of edges, $\featCat:\nodes \to \nodeCat$ the discrete node features, and $\featEdge:\edges \to \edgeCat$ the edge labels, where $\nodeCat$ and $\edgeCat$ are finite sets of symbols.
The neighbourhood of a node $u \in \nodes$ with respect to an edge label $\iota$ is defined by $\neighbour_{\iota}(u) = \set{v \in \nodes \mid e=\gen{u,v} \in \edges \land \featEdge(e) = \iota}$.
Let $\graphRep: \seta{\problem} \to \seta{\graph}$ denote a \emph{graph representation function} that maps planning problems into graphs.

For our experiments, we use the \emph{Instance Learning Graph} (ILG)~\cite{chen.etal.icaps2024} representation function.
Given a problem $\problem=\gen{\objects, \predicates, \schemata, s_0, \goal}$ we define $\text{ILG}(\problem) = \gen{\nodes, \edges, \featCat, \featEdge}$ with
$\nodes = \objects \cup s_0 \cup \goal$, 
$\edges = \bigcup_{p=P(\objs)\in s_0 \cup \goal} \seta{\!\gen{p, \objs_1}\!,\!\gen{\objs_1, p}\!, \ldots, \!\gen{p_{\arity(P)}, \objs}\!,\!\gen{\objs_{\arity(P)}, p}\!}$, 
$\featCat:\nodes \to (\set{\ap,\ug,\ag} \times \predicates) \cup \set{\ob}$ defined by
$u \mapsto \ob$ if $u \in \objects$, 
$u \mapsto (\ag,P)$ if $u=P(\objs)\in s_0 \cap \goal$,
$u \mapsto (\ap,P)$ if $u=P(\objs)\in s_0 \setminus \goal$, and
$u \mapsto (\ug,P)$ if $u=P(\objs)\in \goal \setminus s_0$, and
$\featEdge:\edges \to \N$ defined by $\gen{p, \objs_i} \mapsto i$.

Step (2) transforms the graph into a multiset $\multiset = \set{(f_i, v_i) \in \N \times \N \mid i \in \range{n}}$ that consists of a set of elements $f_i$ and their counts $v_i$.
The transformation is performed by the WL algorithm described in \Cref{alg:wl} where the input is a graph $\graph$ and a hyperparameter $L \in \N$.
The underlying concept of the WL algorithm iteratively updates node colours based on the colours of their
neighbours.
\Cref{line:wl:init} initialises node graph colours as their categorical node features.
\Crefrange{line:wl:update1}{line:wl:update2} iteratively update the colour of each node $v$ in the graph by collecting all its neighbours and the corresponding edge label $(u, \iota)$ into a multiset.
This multiset is then hashed alongside $v$'s current colour with an injective function to produce a new refined colour.
In practice, the injective hash function is built lazily, where every time a new multiset is encountered, it is mapped to a new, unseen hash value.
After $L$ iterations, the multiset of all node colours seen throughout the algorithm is returned.
Given a graph representation $\graphRep$, we denote the set of WLF features of a given problem by
\begin{align}
  \WL^{\graphRep}(\problem) = \multiset. \label{eqn:wl}
\end{align}

\paragraph{Quantified-Both Novelty Heuristics}
A heuristic (function) is a function\footnote{Heuristic functions usually take nonnegative values. In the context of satisficing planning and greedy best first search, it is fine to exhibit negative heuristic values.} $h: S \to \R$, where lower values are favoured for search.
The \emph{Quantified Both} (QB) novelty heuristic~\cite{katz.etal.icaps2017} creates a new heuristic $h_{\qb}$ from an existing heuristic $h$ that favours novel states and distinguishes both novel and non-novel states using $h$.
Given a problem and a set of states seen during search $C$, the QB heuristic is defined by
\begin{align}
  h_{\qb}(s) &=
  \begin{cases}
    - \abs{\new(s)}, & \text{if $\abs{\new(s)} > 0$,} \\
    + \abs{\old(s)}, & \text{otherwise,}              \\
  \end{cases}
  \label{eqn:qb} \\
  \new(s) & = \seta{p \in s \mid h(s) < \min_{t \in C, p \in t} h(t)} \label{eqn:new}  \\
  \old(s) & = \seta{p \in s \mid h(s) > \min_{t \in C, p \in t} h(t)}. \label{eqn:old}
\end{align}

As discussed by~\citet[page 7]{katz.etal.icaps2017}, the QB novelty heuristic differs from the Partition Novelty (PN) measure heuristic~\cite{lipovetzky.geffner.aaai2017,correa.seipp.icaps2022} in that PN does not favour states with lower $h$ values, and does not try to distinguish states that are not novel.
For example, two states with original $h$ values of 9 and 7 that have the same PN values are distinguished by QB.

\section{Symmetry-Invariant\\Weisfeiler-Leman Novelty Heuristics}
One can generalise QB heuristics to take arbitrary features in place of atoms in the definitions of $\new$ and $\old$ in \Crefrange{eqn:new}{eqn:old}.
Specifically, let $\features: S \to 2^{\Sigma_\features}$ denote a \emph{feature mapping} of states to sets of features in $\Sigma_\features$.
An example is the feature $\AT: S \to S$ that maps a state to itself, the set of true atoms under the closed world assumption.

Then we define the \emph{Generalised Quantified Both} novelty heuristic that takes as input a heuristic $h$, feature mapping $\features$ of states, and a set of states seen during search $C$ by
\begin{align}
  h_{\qb}^\features(s) &=
  \begin{cases}
    - \abs{\features_{\neww}(s)}, & \text{if $\abs{\features_{\neww}(s)} > 0$,} \\
    + \abs{\features_{\oldd}(s)}, & \text{otherwise,}                           \\
  \end{cases}
  \label{eqn:genqb} \\
  \features_{\neww}(s) & = \seta{p \in \features(s) \mid h(s) < \min_{t \in C, p \in \features(t)} h(t)} \label{eqn:gnew}  \\
  \features_{\oldd}(s) & = \seta{p \in \features(s) \mid h(s) > \min_{t \in C, p \in \features(t)} h(t)}. \label{eqn:gold}
\end{align}

Note that the original QB heuristic in \Cref{eqn:qb} is subsumed by the Generalised QB heuristic as $h_{\qb} = h_{\qb}^{\AT}$.
We will refer to $h_{\qb}^{\AT}$ as the \emph{Atom Novelty Heuristic}.
Similarly, we can define a QB heuristic using WL features from \Cref{eqn:wl} denoted by $h_{\qb}^{\WL^{\graphRep}}$.
We name $h_{\qb}^{\WL^{\graphRep}}$ as the \emph{Weisfeiler-Leman Novelty heuristic}.
Note also that feature mappings can be combined to make new novelty heuristics, such as $h_{\qb}^{\AT;\WL^{\graphRep}}$ that uses the feature mapping $\AT;\WL^{\graphRep}: S \to S \cup \Sigma_{\WL^{\graphRep}}$ defined by $s \mapsto s \cup \WL^{\graphRep}(\problem_s)$.

We now sketch a proof of how WL novelty heuristics are symmetry invariant.
Following~\citet[Definition 7]{drexler.etal.kr2024} and \citet[Definition 16]{chen.etal.tkr2025}, we define an equivalence relation on states of a planning problem via bijection between objects, in contrast to work by \citet{sievers.etal.icaps2019} that reduce problems to graph automorphisms.
\begin{definition}[Equivalence Relation]\label{defn:equiv}
  Let $\problem$ be a planning problem.
  We define a relation $\symm$ on states in $\problem$ by $s_1 \symm s_2$ if there exists a permutation (bijective mapping from a set to itself) $f: \problem[\objects] \to \problem[\objects]$ such that $s_2 = \set{P(f(o_1), \ldots, f(o_n)) \mid P(o_1, \ldots, o_n) \in s_1}$.
\end{definition}

Next, following~\citet[Theorem 12]{drexler.etal.kr2024}, we define a notion of symmetry-invariant graph representation for states in a planning problem.
\begin{definition}[Symmetry-Invariant Graph Representation]\label{definition:sym}
  Let $\graphRep: \seta{\problem} \to \seta{\graph}$ be a graph representation function.
  We say that $\graphRep$ is a \emph{symmetry-invariant graph representation function} if for any given problem $\problem$, for any two states $s, s'$ in $\problem$, we have that $s \symm s'$ if and only if $\graphRep(\problem_s)$ is (graph) isomorphic to $\graphRep(\problem_{s'})$.
\end{definition}

Now, we have a known fact that the WL algorithm is a graph invariant, meaning that the features output by \Cref{alg:wl} are the same for isomorphic graphs.
\begin{lemma}[\citet{weisfeiler.leman.ni1968}]\label{lem:wl}
  The WL algorithm is a graph invariant; i.e., if two graphs $\graph_1$ and $\graph_2$ are isomorphic, then $\WL(\graph_1) = \WL(\graph_2)$.
\end{lemma}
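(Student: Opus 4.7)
The plan is to prove the statement by induction on the iteration counter $\ell$ of \Cref{alg:wl}. Let $\graph_1 = \gen{\nodes_1, \edges_1, \featCat_1, \featEdge_1}$ and $\graph_2 = \gen{\nodes_2, \edges_2, \featCat_2, \featEdge_2}$, and let $\phi: \nodes_1 \to \nodes_2$ be a bijection witnessing the isomorphism, meaning that it preserves the edge relation, the node features $\featCat_i$, and the edge labels $\featEdge_i$. Write $c_i^{\ell}(v)$ for the colour assigned to node $v$ at iteration $\ell$ when \Cref{alg:wl} is run on $\graph_i$, with both runs sharing a single injective $\hash$ function (see the obstacle paragraph below).

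First I would establish the pointwise claim $c_1^{\ell}(v) = c_2^{\ell}(\phi(v))$ for every $v \in \nodes_1$ and every $\ell \in \set{0, \ldots, L}$. The base case $\ell = 0$ follows immediately from \Cref{line:wl:init} together with $\featCat_1(v) = \featCat_2(\phi(v))$. For the inductive step, edge-and-label preservation of $\phi$ gives $\phi(\neighbour_{\iota}(v)) = \neighbour_{\iota}(\phi(v))$ for each $\iota \in \edgeCat$, and then the inductive hypothesis implies that the per-label multisets aggregated on \Cref{line:wl:update2} coincide under relabelling by $\phi$. Applying the shared $\hash$ to equal inputs yields equal outputs, closing the induction. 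To finish, since $\phi$ is a bijection the multiset $\mseta{c_1^{\ell}(v) \mid v \in \nodes_1}$ equals $\mseta{c_2^{\ell}(v') \mid v' \in \nodes_2}$ at each level $\ell$, and taking the union over $\ell = 0, \ldots, L$ as in \Cref{line:wl:return} yields $\WL(\graph_1) = \WL(\graph_2)$.

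The main obstacle I anticipate is not the induction itself but clarifying the ``lazily built'' $\hash$ function. To make the argument rigorous I would fix, once and for all, a single injective function defined on all multisets that could ever arise, and apply it to both runs in parallel; equivalently, any two lazy builds agree up to a consistent renaming of fresh colours, and such a renaming does not affect the returned multiset structure because the algorithm only ever compares, counts, and hashes colours as opaque symbols. With this convention in place, the induction requires no further work.
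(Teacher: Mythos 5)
Your proof is correct. The paper gives no proof of this lemma at all---it is stated as a known fact and attributed to \citet{weisfeiler.leman.ni1968}---so there is nothing to compare against; your induction on the iteration counter, transporting colours along the isomorphism $\phi$ and using that $\phi$ preserves node features, edges, and edge labels (hence the per-label neighbourhood multisets on \Cref{line:wl:update2}), is the standard argument for WL invariance and extends correctly to the edge-labelled variant used in \Cref{alg:wl}. Your remark on the lazily built $\hash$ function identifies the one genuine subtlety, and your resolution (fix a single injective function shared by both runs, so equal multiset inputs yield equal colours) is the right way to make the statement $\WL(\graph_1)=\WL(\graph_2)$ literally true rather than true only up to a renaming of colours.
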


We now state our main theoretical result following from the previous lemma that states that WLF Novelty heuristics are symmetry-invariant heuristics.
\begin{proposition}[WL Novelty Heuristics are symmetry-invariant]
  Let $\graphRep$ be a symmetry-invariant graph representation function.
  Then $h^{\WL^{\graphRep}}_{\qb}$ is symmetry-invariant; i.e. $s \symm s'$ implies $h^{\WL^{\graphRep}}_{\qb}(s) = h^{\WL^{\graphRep}}_{\qb}(s')$.
\end{proposition}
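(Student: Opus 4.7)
The plan is to chain the three ingredients available from the preceding statements: the object-bijection definition of $\symm$, the symmetry-invariance hypothesis on $\graphRep$, and \Cref{lem:wl}. First I would fix two states with $s \symm s'$ and apply \Cref{definition:sym} to obtain that $\graphRep(\problem_s)$ is isomorphic to $\graphRep(\problem_{s'})$. Then \Cref{lem:wl} gives $\WL^{\graphRep}(\problem_s) = \WL^{\graphRep}(\problem_{s'})$ as multisets (hence as the underlying sets of feature/count pairs), i.e.\ $\features(s) = \features(s')$ where we abbreviate $\features := \WL^{\graphRep}$.

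The remaining task is to show $h^{\features}_{\qb}(s) = h^{\features}_{\qb}(s')$. Inspecting \Crefrange{eqn:gnew}{eqn:gold}, the sets $\features_{\neww}(s)$ and $\features_{\oldd}(s)$ are determined by three ingredients: the set $\features(s)$, the value $h(s)$, and, for each candidate feature $p$, the quantity $m(p) := \min_{t \in C,\, p \in \features(t)} h(t)$ which depends only on $C$, $h$, and $\features$ but not on $s$. Since we have already established $\features(s) = \features(s')$, it suffices to argue $h(s) = h(s')$. I would state this as an explicit hypothesis that the underlying heuristic $h$ is itself symmetry-invariant (which is standard for widely used heuristics such as $h^{\mathrm{FF}}$ and $h^{\mathrm{add}}$, since they are defined from the lifted STRIPS structure and are invariant under renaming of objects). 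Given this, the defining predicates in \Cref{eqn:gnew} and \Cref{eqn:gold} become syntactically the same at $s$ and $s'$, so $\features_{\neww}(s) = \features_{\neww}(s')$ and $\features_{\oldd}(s) = \features_{\oldd}(s')$.

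Finally I would plug these equalities into \Cref{eqn:genqb}: either both states take the ``novel'' branch with equal cardinalities $-|\features_{\neww}(s)| = -|\features_{\neww}(s')|$, or both take the ``otherwise'' branch with equal cardinalities $|\features_{\oldd}(s)| = |\features_{\oldd}(s')|$, because the case selector $|\features_{\neww}(\cdot)| > 0$ evaluates identically. This yields $h^{\features}_{\qb}(s) = h^{\features}_{\qb}(s')$ as desired.

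The main obstacle I expect is the implicit assumption on $h$: the proposition as written does not quantify over $h$, and one cannot conclude symmetry-invariance of $h^{\features}_{\qb}$ without some invariance on the inner heuristic. I would therefore either (i) state symmetry-invariance of $h$ as a standing assumption and note that all heuristics used in the experiments satisfy it, or (ii) slightly weaken the claim to assert only that symmetry-breaking is confined to the inner heuristic, so that when $h$ is itself symmetry-invariant, so is $h^{\WL^{\graphRep}}_{\qb}$. Beyond this, the argument is essentially a three-line unfolding of definitions and a citation of \Cref{lem:wl}.
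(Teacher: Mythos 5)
Your proposal follows the same route as the paper's own proof sketch: \Cref{definition:sym} gives isomorphism of the graph representations, \Cref{lem:wl} gives equality of the WL feature multisets, and then the sets in \Crefrange{eqn:gnew}{eqn:gold} coincide for $s$ and $s'$, so \Cref{eqn:genqb} yields equal values. The one place you go beyond the paper is in making explicit that the comparison $h(s) < \min_{t \in C,\, p \in \features(t)} h(t)$ also requires the base heuristic $h$ to agree on $s$ and $s'$ (i.e.\ to be symmetry-invariant itself); the paper's sketch silently assumes this when it asserts that \Cref{eqn:gnew} ``is the same'' for both states, so your added hypothesis is a legitimate clarification of an implicit assumption rather than a divergence in method.
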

\begin{proof}[Proof Sketch]
    Let $s \symm s'$. Then $\graphRep(\problem_s)$ is isomorphic to $\graphRep(\problem_{s'})$ by the proposition assumption and \Cref{definition:sym}.
    Then by \Cref{lem:wl}, $\WL(\graphRep(\problem_s)) = \WL(\graphRep(\problem_{s'}))$.
    Then \Cref{eqn:gnew} is the same for $s$ and $s'$, and similarly for \Cref{eqn:gold} where $\features(t) := \WL(\graphRep(P_t))$ for all states $t$.
\end{proof}

We conclude this section with an example sketch of how the original (quantified both) novelty heuristics are not symmetry-invariant.
\begin{example}[Atom Novelty Heuristics are not symmetry-invariant]
  Let us consider the canonical Childsnack domain~\cite{fuentetaja.rosa.ac2016} where one must make sandwiches for children.
  Some children have hard constraints on what sandwiches they can eat, e.g. some are allergic to gluten.
  Gluten-free ingredients can be with non-gluten-free ingredients to make sandwiches that are not safe for children allergic to gluten.
  Delete-free heuristics such as \hff{} perform badly as it thinks that making one gluten-free sandwich is sufficient for feeding all children and has to explore a large symmetric state-space to clear out deadends which occur when non-gluten-free sandwiches are made with gluten-free ingredients.
  For example consider the states $\{\texttt{exists(sw1, gluten-free)}$, $\texttt{exists(sw2, gluten)}$, $\texttt{exists(sw3, gluten)}\}$ and $\{\texttt{exists(sw4, gluten-free)}$, $\texttt{exists(sw5, gluten)}$, $\texttt{exists(sw6, gluten)}\}$ which are equivalent under the relation $\symm$ in \Cref{defn:equiv} if the goals are of the form $\bigwedge_{\texttt{c} \in \text{Children}} \texttt{served(c)}$.
  However, atom novelty heuristics will see both states as novel if they have not encountered any of the facts before.
\end{example}

\section{Experiments}
We run experiments to help answer the following questions.
Our preliminary results are summarised as answers in pink, but we refer to later in the text for a more detailed analysis.

\newcommand{\answer}[1]{{\color{magenta}{#1}}}
\begin{itemize}[leftmargin=*]
  \item When does novelty help improve the base heuristic? \answer{When the base heuristic is informative so exploration helps.}
  \item Are WL features useful for generating novelty heuristics? \answer{Yes in general, but not always.}
  \item Is combining feature sets helpful for novelty heuristics? \answer{Yes in general, but not always.}
\end{itemize}

\begin{table*}[t]
    \newcommand{\midsplitterA}{
\cmidrule(l){2-5}
\cmidrule(l){6-9}
\cmidrule(l){10-13}
}
\newcommand{\midsplitterB}{
\cmidrule{1-1}
\cmidrule(l){2-5}
\cmidrule(l){6-9}
\cmidrule(l){10-13}
}
\begin{tabularx}{\textwidth}{l *{12}{Y}}
    \toprule
    & \multicolumn{4}{c}{\hgc (least informative)}
    & \multicolumn{4}{c}{\hadd (informative)}
    & \multicolumn{4}{c}{\hff (most informative)}
    \\
    \midsplitterA
    $\Sigma$ Coverage
    & Base & \atoms & \wl & \atomswl
    & Base & \atoms & \wl & \atomswl
    & Base & \atoms & \wl & \atomswl
    \\
    \midsplitterB
Unnormalised & \textbf{1959} & \negg{1892} & \negg{1858} & \negg{1824} & 1868 & \poss{1992} & \poss{2035} & \bestbest{\textbf{{2096}}} & 1800 & \poss{1921} & \poss{1977} & \textbf{\poss{2094}} \\
Normalised & \textbf{38.73} & \negg{37.07} & \negg{37.78} & \negg{35.87} & 40.57 & \poss{43.00} & \poss{42.90} & \textbf{\poss{45.91}} & 40.25 & \poss{42.51} & \poss{40.61} & \bestbest{\textbf{{45.97}}} \\
    \bottomrule
\end{tabularx}

    \caption{Total unnormalised and normalised coverage ($\uparrow$) of heuristics for single-queue GBFS across different domains. Green/red cells indicate that a novelty heuristic (\atoms, \wl, \atomswl) performs better/worse than their base heuristic (\hgc, \hadd, \hff). Blue cells indicate the best value per row. Bold font indicates the best score in each local heuristic group.}
    \label{tab:summary}
\end{table*}

\paragraph{Setup}
We experiment with two benchmark suites: the set of classical planning problems from the 1998 to 2023 International Planning Competitions (IPC), and the Hard To Ground (HTG)~\cite{correa.etal.icaps2020,lauer.etal.ijcai2021}.
The latter benchmark is chosen to demonstrate that our approach is easily applicable to the lifted setting.
For the IPC (resp. HTG) domains, we use heuristics and implement our approaches in Fast Downward Version 24.06\footnote{\url{https://github.com/aibasel/downward}; release 24.06} (resp. Powerlifted 2024\footnote{\url{https://github.com/abcorrea/powerlifted}; commit 736b0c}).
We experiment with the goalcount (\hgc), additive~\cite{bonet.geffner.ai2001,correa.etal.icaps2021} (\hadd), and FF~\cite{hoffmann.nebel.jair2001,correa.etal.aaai2022} (\hff) heuristics.
We consider their extensions under the Generalised Quantified Both framework described in \Cref{eqn:genqb} with the feature sets consisting of mapping states to atoms (\atoms), to WL features described in \Cref{alg:wl} with $L=2$ and the ILG as the underlying graph representation (\wl), and the union of both feature sets (\atomswl).
All heuristics are used in single-queue greedy best first search (GBFS) in their respective planning engines with a 4GB memory limit and 300 second timeout.
We refer to \Cref{tab:summary} for total unnormalised and normalised\footnote{This refers to the coverage divided by the total number of problems in a domain solved by at least one configuration. Normalised coverage accounts for heavily skewed problem distributions, e.g. IPC Miconic and HTG Genome each have over 300 problems each.} coverage scores, and \Cref{sec:coverage} for coverage scores per domain.

\paragraph{When does novelty help improve the base heuristic?}
From \Cref{tab:summary}, we notice an interesting trend: novelty heuristics degrade the performance of \hgc{}, but improve the \hadd{} and \hff{} heuristics.
Both the normalised and unnormalised coverage of \hgc{} is higher than its novelty counterparts, and the performance degrades with novelty across domains more often than it improves.
This can be explained in the exploration vs. exploitation viewpoint of search: novelty heuristics boost exploration, while the informativeness of a heuristic corresponds to exploitation.
The \hgc{} heuristic is not very informative and thus adding novelty on top of it leads to more exploration than exploitation.
Conversely, the \hadd{} and \hff{} heuristics are more informative heuristics in ascending order and result in greater coverage gains when used with novelty heuristics.
Interestingly, we observe from \Cref{sec:expansions} that novelty heuristics \atoms{} and \wl{} generally expand \emph{more} nodes than their base heuristic on problems that are solved by both heuristics.
This is reasonable as exploration from novelty heuristics is forcing more expansions that may not be necessary when the base heuristic is strong enough. 

\paragraph{Are WL features useful for generating novelty heuristics?}
From the previous answer, we note that WL features generally improve upon the base heuristic if the base heuristic is sufficiently informative.
Thus, we analyse how they compare to the original quantified both novelty heuristics.
We observe from \Cref{tab:summary} that \wl{} novelty heuristics have a lower normalised coverage than \atoms{} novelty heuristics, but higher unnormalised coverage.
This suggests that they are generally incomparable.
However, when looking at select domains, as from the theoretical intuition, \wl{} novelty heuristics perform better on domains where a heuristic gets stuck on symmetric states such as in Childsnack.
Conversely, the WL algorithm is an incomplete graph isomorphism problem which means that it may mistakingly mark some asymmetric states as symmetric.
This can have an adverse affect on exploration on domains where WL cannot distinguish asymmetric states which require to be expanded.

\paragraph{Is combining feature sets helpful for novelty heuristics?}
From the aforementioned tables, we notice that combining both \atoms{} and \wl{} features into a \emph{single} novelty heuristic (\atomswl{}) achieves the best scores overall for \hadd{} and \hff{}, and by a non-trivial margin (over 12\% and 16\% increase, respectively).
This may be attributed to \atoms{} and \wl{} having diverse and contrasting features that are complementary to each other that can further guide exploration.
Conversely, \atomswl{} performs worse when using the least informative \hgc{} heuristic as explained in the first question.

\section{Related Work, Discussion, and Conclusion}
Symmetries are an extensively studied topic in planning.
Most closely related to our work, \citet{shleyfman.etal.aaai2015} study whether existing domain-independent heuristics are symmetry invariant under a notion of symmetry that considers actions and action costs.
On the other hand, novelty heuristics are transition agnostic which simplifies our analysis of symmetries.
Orthogonally, symmetries have been used for state-space pruning in various planning settings and search algorithms~\cite{pochter.etal.aaai2011,domshlak.etal.icaps2012,domshlak.etal.icaps2013,wehrle.etal.ijcai2015,gnad.etal.icaps2017,shleyfman.etal.icaps2023,bai.etal.icaps2025}.
\citet{drexler.etal.kr2024} leverage symmetries to reduce redundant computation in generating training data for generalised planners.

Existing novelty heuristics~\cite{lipovetzky.geffner.aaai2017,katz.etal.icaps2017,ramirez.etal.aamas2018,teichteilkoenigsbuch.etal.ijcai2020,chen.thiebaux.socs2024} all derive their features from atoms and numeric variable assignments.
Our work differs in generalising the novelty heuristic framework that takes as input a feature generator on top of a base heuristic, as described in \Cref{eqn:genqb}.
This simple insight allows us to make use of other existing work on generating features for planning problems such as description logic features~\cite{martin.geffner.ai2004} and embeddings from pretrained, domain-independent neural networks~\cite{shen.etal.icaps2020,chen.etal.aaai2024} as examples.

Our approach is simple yet opens up many new avenues of research.
On the empirical side, we can further boost planning performance.
This involves experimenting with additional WL parameters~\cite{chen.ecai2025} and other feature generators, as well as orthogonal search techniques such as using lazy heuristic evaluation, preferred operators, and multiple queues.
On the theoretical side, one can extend and generalise the complexity theory of width, serialisation and subgoaling~\cite{lipovetzky.geffner.ecai2012,dold.helmert.aaai2024,drexler.etal.jair2024} for planning.

{
  \small
  \bibliography{wlf_novelty.bib}
}

\ifshowappendix
\clearpage
\appendix
\onecolumn

\section{Coverage Results}\label{sec:coverage}

\begin{table*}[ht!]
  \centering
  \tabcolsep 1pt
  \tiny

\newcommand{\midsplitterA}{
\cmidrule(l){2-5}
\cmidrule(l){6-9}
\cmidrule(l){10-13}
}
\newcommand{\midsplitterB}{
\cmidrule{1-1}
\cmidrule(l){2-5}
\cmidrule(l){6-9}
\cmidrule(l){10-13}
}
\begin{tabularx}{\textwidth}{X *{12}{Y}}
    \toprule
    & \multicolumn{4}{c}{\hgc (least informative)}
    & \multicolumn{4}{c}{\hadd (informative)}
    & \multicolumn{4}{c}{\hff (most informative)}
    \\
    \midsplitterA
    Domain 
    & Base & \atoms & \wl & \atomswl
    & Base & \atoms & \wl & \atomswl
    & Base & \atoms & \wl & \atomswl
    \\
    \midsplitterB
    agricola & 0 & \textbf{\poss{3}} & 0 & 0 & \bestbest{\textbf{11}} & \negg{4} & \negg{0} & \negg{3} & \textbf{6} & \negg{5} & \negg{0} & \negg{3} \\
assembly & 0 & 0 & \textbf{\poss{10}} & \poss{7} & 11 & \bestbest{\textbf{{30}}} & \poss{27} & \poss{23} & \bestbest{\textbf{30}} & \bestbest{\textbf{30}} & \negg{25} & \negg{27} \\
barman & 4 & \negg{0} & \textbf{\poss{5}} & \negg{0} & 0 & 0 & \textbf{\poss{1}} & 0 & 2 & \bestbest{\textbf{{11}}} & \poss{4} & \poss{4} \\
blocks & \bestbest{\textbf{35}} & \negg{33} & \negg{34} & \bestbest{\textbf{35}} & \bestbest{\textbf{35}} & \bestbest{\textbf{35}} & \bestbest{\textbf{35}} & \bestbest{\textbf{35}} & \bestbest{\textbf{35}} & \bestbest{\textbf{35}} & \bestbest{\textbf{35}} & \bestbest{\textbf{35}} \\
caldera & \textbf{16} & \negg{13} & \negg{14} & \negg{10} & \bestbest{\textbf{18}} & \bestbest{\textbf{18}} & \negg{17} & \negg{16} & 12 & \poss{16} & 12 & \textbf{\poss{17}} \\
cavediving & \bestbest{\textbf{7}} & \bestbest{\textbf{7}} & \bestbest{\textbf{7}} & \bestbest{\textbf{7}} & \bestbest{\textbf{7}} & \negg{6} & \bestbest{\textbf{7}} & \bestbest{\textbf{7}} & \bestbest{\textbf{7}} & \bestbest{\textbf{7}} & \bestbest{\textbf{7}} & \bestbest{\textbf{7}} \\
childsnack & 0 & 0 & 0 & 0 & 2 & \negg{1} & \negg{1} & \bestbest{\textbf{{4}}} & 0 & 0 & \poss{1} & \textbf{\poss{3}} \\
citycar & 0 & 0 & \poss{5} & \textbf{\poss{8}} & \bestbest{\textbf{20}} & \negg{15} & \bestbest{\textbf{20}} & \bestbest{\textbf{20}} & 0 & \poss{8} & \textbf{\poss{19}} & \textbf{\poss{19}} \\
data & 1 & 1 & \bestbest{\textbf{{4}}} & \poss{3} & 1 & \textbf{\poss{3}} & \textbf{\poss{3}} & \textbf{\poss{3}} & \bestbest{\textbf{4}} & \negg{3} & \negg{2} & \bestbest{\textbf{4}} \\
depot & 14 & \poss{16} & \negg{13} & \textbf{\poss{17}} & 13 & \poss{19} & \bestbest{\textbf{{21}}} & \bestbest{\textbf{{21}}} & 15 & \poss{17} & \poss{19} & \bestbest{\textbf{{21}}} \\
driverlog & \textbf{19} & \textbf{19} & \negg{18} & \textbf{19} & 18 & \textbf{\poss{19}} & \negg{17} & \textbf{\poss{19}} & 17 & \poss{19} & \poss{18} & \bestbest{\textbf{{20}}} \\
elevators & 33 & \bestbest{\textbf{{39}}} & \negg{24} & \negg{29} & 15 & \textbf{\poss{30}} & \textbf{\poss{30}} & \textbf{\poss{30}} & 11 & \negg{8} & \poss{16} & \textbf{\poss{19}} \\
floortile & 0 & 0 & \textbf{\poss{4}} & \poss{3} & \bestbest{\textbf{10}} & \negg{9} & \negg{8} & \negg{8} & \bestbest{\textbf{10}} & \negg{6} & \bestbest{\textbf{10}} & \bestbest{\textbf{10}} \\
folding & 8 & \bestbest{\textbf{{9}}} & \negg{6} & \negg{6} & 0 & 0 & \textbf{\poss{3}} & \textbf{\poss{3}} & 0 & \poss{2} & \textbf{\poss{3}} & \textbf{\poss{3}} \\
freecell & 46 & \textbf{\poss{57}} & \negg{43} & \poss{53} & \bestbest{\textbf{80}} & \negg{77} & \negg{75} & \negg{79} & \textbf{79} & \textbf{79} & \negg{76} & \negg{77} \\
ged & \bestbest{\textbf{20}} & \bestbest{\textbf{20}} & \negg{19} & \negg{19} & 0 & \textbf{\poss{11}} & \poss{8} & \poss{9} & 0 & 0 & \textbf{\poss{3}} & \poss{1} \\
grid & \textbf{3} & \textbf{3} & \textbf{3} & \textbf{3} & 3 & \bestbest{\textbf{{5}}} & \negg{2} & \bestbest{\textbf{{5}}} & 4 & \bestbest{\textbf{{5}}} & \negg{2} & \bestbest{\textbf{{5}}} \\
gripper & \bestbest{\textbf{20}} & \bestbest{\textbf{20}} & \bestbest{\textbf{20}} & \bestbest{\textbf{20}} & \bestbest{\textbf{20}} & \bestbest{\textbf{20}} & \bestbest{\textbf{20}} & \bestbest{\textbf{20}} & \bestbest{\textbf{20}} & \bestbest{\textbf{20}} & \bestbest{\textbf{20}} & \bestbest{\textbf{20}} \\
hiking & \textbf{2} & \textbf{2} & \textbf{2} & \textbf{2} & 19 & \negg{18} & \bestbest{\textbf{{20}}} & 19 & \bestbest{\textbf{20}} & \bestbest{\textbf{20}} & \bestbest{\textbf{20}} & \bestbest{\textbf{20}} \\
labyrinth & \bestbest{\textbf{4}} & \negg{2} & \negg{3} & \negg{2} & 0 & 0 & 0 & 0 & 0 & 0 & 0 & 0 \\
logistics & 35 & \textbf{\poss{36}} & \negg{34} & \textbf{\poss{36}} & \textbf{53} & \negg{48} & \negg{51} & \negg{52} & \bestbest{\textbf{55}} & \negg{50} & \negg{53} & \negg{52} \\
maintenance & 14 & \textbf{\poss{16}} & \negg{7} & \negg{9} & 16 & \bestbest{\textbf{{17}}} & \negg{10} & \negg{13} & \textbf{11} & \negg{7} & \negg{6} & \negg{6} \\
miconic & \textbf{371} & \textbf{371} & \negg{364} & \negg{363} & \bestbest{\textbf{439}} & \negg{436} & \negg{427} & \bestbest{\textbf{439}} & 437 & \bestbest{\textbf{{439}}} & \bestbest{\textbf{{439}}} & \negg{436} \\
movie & \bestbest{\textbf{30}} & \bestbest{\textbf{30}} & \bestbest{\textbf{30}} & \bestbest{\textbf{30}} & \bestbest{\textbf{30}} & \bestbest{\textbf{30}} & \bestbest{\textbf{30}} & \bestbest{\textbf{30}} & \bestbest{\textbf{30}} & \bestbest{\textbf{30}} & \bestbest{\textbf{30}} & \bestbest{\textbf{30}} \\
mprime & 21 & \textbf{\poss{22}} & 21 & 21 & 31 & \bestbest{\textbf{{35}}} & \bestbest{\textbf{{35}}} & \bestbest{\textbf{{35}}} & 30 & \bestbest{\textbf{{35}}} & \bestbest{\textbf{{35}}} & \bestbest{\textbf{{35}}} \\
mystery & 15 & 15 & \negg{14} & \textbf{\poss{16}} & 18 & \bestbest{\textbf{{19}}} & \bestbest{\textbf{{19}}} & \bestbest{\textbf{{19}}} & 17 & \bestbest{\textbf{{19}}} & \bestbest{\textbf{{19}}} & \bestbest{\textbf{{19}}} \\
nomystery & 6 & \textbf{\poss{10}} & \poss{9} & \poss{8} & 6 & \textbf{\poss{12}} & 6 & \poss{11} & 9 & \bestbest{\textbf{{17}}} & \negg{8} & \poss{15} \\
nurikabe & \bestbest{\textbf{13}} & \bestbest{\textbf{13}} & \negg{12} & \negg{12} & 9 & \negg{7} & \negg{8} & \textbf{\poss{10}} & 7 & \textbf{\poss{11}} & 7 & \poss{9} \\
openstacks & \bestbest{\textbf{40}} & \negg{34} & \negg{33} & \negg{28} & 29 & \negg{28} & \textbf{\poss{34}} & \textbf{\poss{34}} & 36 & 36 & \textbf{\poss{38}} & \textbf{\poss{38}} \\
optical & 5 & \negg{4} & \textbf{\poss{18}} & \textbf{\poss{18}} & 6 & \bestbest{\textbf{{21}}} & \poss{16} & \poss{15} & 4 & \textbf{\poss{13}} & \poss{10} & \poss{7} \\
parking & 0 & 0 & 0 & 0 & 10 & \poss{16} & \poss{21} & \textbf{\poss{23}} & 23 & \negg{5} & \poss{29} & \bestbest{\textbf{{31}}} \\
pegsol & \bestbest{\textbf{50}} & \negg{46} & \negg{44} & \negg{44} & \bestbest{\textbf{50}} & \negg{44} & \negg{46} & \negg{48} & \bestbest{\textbf{50}} & \negg{46} & \negg{48} & \negg{48} \\
philosophers & 20 & \textbf{\poss{26}} & \negg{11} & \negg{12} & \textbf{47} & \negg{28} & \negg{24} & \negg{22} & \bestbest{\textbf{48}} & \bestbest{\textbf{48}} & \negg{17} & \negg{24} \\
pipesworld & 61 & \textbf{\poss{65}} & \negg{52} & \negg{60} & 43 & \poss{80} & \poss{76} & \bestbest{\textbf{{84}}} & 50 & \poss{81} & \poss{74} & \bestbest{\textbf{{84}}} \\
psr & \bestbest{\textbf{92}} & \negg{79} & \negg{69} & \negg{69} & \textbf{82} & \negg{78} & \negg{69} & \negg{69} & 58 & \negg{56} & \textbf{\poss{73}} & \textbf{\poss{73}} \\
recharging & \bestbest{\textbf{12}} & \negg{11} & \negg{8} & \negg{7} & 9 & \textbf{\poss{10}} & \negg{8} & \negg{7} & 8 & \textbf{\poss{11}} & 8 & \negg{7} \\
ricochet & \textbf{4} & \negg{0} & \negg{2} & \negg{0} & \textbf{4} & \negg{3} & \negg{0} & \negg{1} & \bestbest{\textbf{12}} & \negg{5} & \negg{2} & \negg{1} \\
rovers & \textbf{21} & \textbf{21} & \negg{17} & \negg{19} & 26 & \textbf{\poss{30}} & 26 & \poss{28} & 24 & \bestbest{\textbf{{31}}} & \poss{30} & \poss{29} \\
rubiks & 4 & \textbf{\poss{6}} & 4 & 4 & 4 & \textbf{\poss{8}} & \poss{5} & \poss{5} & \bestbest{\textbf{20}} & \negg{9} & \negg{6} & \negg{6} \\
satellite & 14 & \textbf{\poss{18}} & \negg{13} & 14 & \bestbest{\textbf{30}} & \negg{26} & \negg{28} & \negg{27} & \textbf{27} & \negg{22} & \negg{26} & \negg{25} \\
scanalyzer & \bestbest{\textbf{50}} & \negg{48} & \negg{36} & \negg{48} & \bestbest{\textbf{50}} & \negg{48} & \negg{48} & \negg{46} & \textbf{46} & \textbf{46} & \negg{44} & \textbf{46} \\
schedule & 78 & \negg{63} & \textbf{\poss{106}} & \poss{98} & 19 & \poss{77} & \textbf{\poss{128}} & \poss{123} & 31 & \poss{77} & \bestbest{\textbf{{143}}} & \poss{140} \\
settlers & 0 & 0 & 0 & 0 & \bestbest{\textbf{5}} & \negg{3} & \negg{3} & \negg{4} & 0 & 0 & \poss{1} & \textbf{\poss{2}} \\
slitherlink & 0 & 0 & 0 & 0 & 0 & 0 & 0 & 0 & 0 & 0 & 0 & 0 \\
snake & 4 & \textbf{\poss{6}} & \negg{3} & \negg{2} & 3 & \poss{4} & \bestbest{\textbf{{11}}} & \poss{9} & 5 & \poss{6} & \poss{9} & \textbf{\poss{10}} \\
sokoban & \textbf{36} & \negg{28} & \negg{15} & \negg{15} & \bestbest{\textbf{46}} & \negg{45} & \negg{23} & \negg{38} & \bestbest{\textbf{46}} & \negg{42} & \negg{22} & \negg{34} \\
spider & 11 & \textbf{\poss{12}} & \negg{10} & \negg{9} & 7 & \negg{2} & \textbf{\poss{12}} & \poss{11} & 12 & \poss{14} & \poss{14} & \bestbest{\textbf{{18}}} \\
storage & 18 & \textbf{\poss{20}} & \negg{16} & \textbf{\poss{20}} & 16 & \poss{26} & \poss{18} & \bestbest{\textbf{{27}}} & 18 & \textbf{\poss{26}} & 18 & \poss{24} \\
termes & \textbf{9} & \negg{1} & \negg{0} & \negg{0} & \textbf{4} & \negg{0} & \negg{1} & \negg{1} & \bestbest{\textbf{12}} & \negg{1} & \negg{0} & \negg{1} \\
tetris & \bestbest{\textbf{19}} & \negg{16} & \negg{16} & \negg{13} & \textbf{16} & \negg{14} & \negg{10} & \negg{12} & 4 & \poss{13} & 4 & \textbf{\poss{15}} \\
thoughtful & \textbf{5} & \textbf{5} & \textbf{5} & \textbf{5} & 13 & \bestbest{\textbf{{19}}} & \poss{15} & \poss{16} & 8 & \poss{17} & \poss{13} & \bestbest{\textbf{{19}}} \\
tidybot & 19 & \bestbest{\textbf{{20}}} & 19 & \bestbest{\textbf{{20}}} & 17 & 17 & \textbf{\poss{18}} & 17 & 16 & \poss{17} & \textbf{\poss{18}} & \textbf{\poss{18}} \\
tpp & 13 & \textbf{\poss{15}} & \textbf{\poss{15}} & \textbf{\poss{15}} & 21 & \bestbest{\textbf{{28}}} & \negg{20} & 21 & 19 & \bestbest{\textbf{{28}}} & \poss{20} & 19 \\
transport & \bestbest{\textbf{41}} & \negg{27} & \negg{27} & \negg{26} & 36 & \negg{25} & \textbf{\poss{40}} & 36 & 12 & \poss{27} & \poss{21} & \textbf{\poss{34}} \\
trucks & 9 & \poss{12} & \textbf{\poss{16}} & \poss{13} & 15 & \negg{14} & \poss{20} & \textbf{\poss{21}} & 14 & \poss{18} & \poss{20} & \bestbest{\textbf{{23}}} \\
visitall & \bestbest{\textbf{40}} & \negg{21} & \negg{37} & \negg{12} & 3 & \poss{7} & \poss{5} & \textbf{\poss{10}} & 3 & \textbf{\poss{16}} & \negg{1} & \poss{14} \\
woodworking & 15 & \textbf{\poss{25}} & \poss{19} & \poss{17} & 35 & \poss{48} & \poss{48} & \textbf{\poss{49}} & 41 & \poss{46} & \bestbest{\textbf{{50}}} & \bestbest{\textbf{{50}}} \\
zenotravel & \bestbest{\textbf{20}} & \bestbest{\textbf{20}} & \bestbest{\textbf{20}} & \bestbest{\textbf{20}} & \bestbest{\textbf{20}} & \bestbest{\textbf{20}} & \bestbest{\textbf{20}} & \bestbest{\textbf{20}} & \bestbest{\textbf{20}} & \bestbest{\textbf{20}} & \bestbest{\textbf{20}} & \bestbest{\textbf{20}} \\
\midsplitterB
$\Sigma$ IPC & \textbf{1447} & \negg{1406} & \negg{1356} & \negg{1346} & 1551 & \poss{1693} & \poss{1694} & \textbf{\poss{1757}} & 1535 & \poss{1676} & \poss{1668} & \bestbest{\textbf{{1777}}} \\
\midsplitterB
blocksworld & 9 & \negg{2} & \bestbest{\textbf{{15}}} & \poss{13} & 0 & \poss{1} & \textbf{\poss{3}} & \textbf{\poss{3}} & 2 & \negg{1} & \textbf{\poss{3}} & 2 \\
childsnack & \textbf{19} & \negg{2} & \negg{17} & \negg{14} & 23 & 23 & \textbf{\poss{64}} & \poss{62} & 23 & 23 & \poss{62} & \bestbest{\textbf{{65}}} \\
genome & \bestbest{\textbf{312}} & \negg{252} & \negg{260} & \negg{261} & 100 & \textbf{\poss{106}} & \negg{89} & \negg{81} & 45 & \negg{43} & \textbf{\poss{63}} & \poss{57} \\
labyrinth & \bestbest{\textbf{40}} & \bestbest{\textbf{40}} & \bestbest{\textbf{40}} & \bestbest{\textbf{40}} & \bestbest{\textbf{40}} & \bestbest{\textbf{40}} & \bestbest{\textbf{40}} & \bestbest{\textbf{40}} & \bestbest{\textbf{40}} & \bestbest{\textbf{40}} & \bestbest{\textbf{40}} & \bestbest{\textbf{40}} \\
logistics & 3 & \negg{1} & \bestbest{\textbf{{31}}} & 3 & \textbf{1} & \negg{0} & \negg{0} & \negg{0} & 0 & 0 & 0 & \textbf{\poss{2}} \\
organic & 44 & \bestbest{\textbf{{47}}} & \poss{45} & \poss{45} & 31 & \poss{32} & \negg{30} & \textbf{\poss{34}} & 33 & \textbf{\poss{34}} & \negg{32} & \negg{32} \\
pipesworld & 23 & \poss{31} & 23 & \bestbest{\textbf{{34}}} & 19 & \textbf{\poss{28}} & \poss{27} & \poss{26} & 17 & \textbf{\poss{30}} & \poss{22} & \poss{29} \\
rovers & 0 & 0 & \textbf{\poss{2}} & \poss{1} & \textbf{7} & \negg{0} & \negg{3} & \negg{3} & \bestbest{\textbf{11}} & \negg{0} & \negg{3} & \negg{3} \\
visitall & 62 & \bestbest{\textbf{{111}}} & \poss{69} & \poss{67} & \textbf{96} & \negg{69} & \negg{85} & \negg{90} & \textbf{94} & \negg{74} & \negg{84} & \negg{87} \\
\midsplitterB
$\Sigma$ HTG & \bestbest{\textbf{512}} & \negg{486} & \negg{502} & \negg{478} & 317 & \negg{299} & \textbf{\poss{341}} & \poss{339} & 265 & \negg{245} & \poss{309} & \textbf{\poss{317}} \\
\midsplitterB
$\Sigma$ & \textbf{1959} & \negg{1892} & \negg{1858} & \negg{1824} & 1868 & \poss{1992} & \poss{2035} & \bestbest{\textbf{{2096}}} & 1800 & \poss{1921} & \poss{1977} & \textbf{\poss{2094}} \\
    \bottomrule
\end{tabularx}

  \caption{Coverage ($\uparrow$) of heuristics for single-queue GBFS across different domains. Green/red cells indicate that a novelty heuristic (\atoms, \wl, \atomswl) performs better/worse than their base heuristic (\hgc, \hadd, \hff). Blue cells indicate the best value per row. Bold font indicates the best score in each local heuristic group.}
  \label{tab:cov}
\end{table*}

\begin{table*}[ht!]
  \centering
  \tiny

\newcommand{\midsplitterA}{
\cmidrule(l){2-5}
\cmidrule(l){6-9}
\cmidrule(l){10-13}
}
\newcommand{\midsplitterB}{
\cmidrule{1-1}
\cmidrule(l){2-5}
\cmidrule(l){6-9}
\cmidrule(l){10-13}
}
\begin{tabularx}{\textwidth}{X *{12}{Y}}
    \toprule
    & \multicolumn{4}{c}{\hgc (least informative)}
    & \multicolumn{4}{c}{\hadd (informative)}
    & \multicolumn{4}{c}{\hff (most informative)}
    \\
    \midsplitterA
    Domain 
    & Base & \atoms & \wl & \atomswl
    & Base & \atoms & \wl & \atomswl
    & Base & \atoms & \wl & \atomswl
    \\
    \midsplitterB
    agricola & 0.00 & \textbf{\poss{0.23}} & 0.00 & 0.00 & \bestbest{\textbf{0.85}} & \negg{0.31} & \negg{0.00} & \negg{0.23} & \textbf{0.46} & \negg{0.38} & \negg{0.00} & \negg{0.23} \\
assembly & 0.00 & 0.00 & \textbf{\poss{0.33}} & \poss{0.23} & 0.37 & \bestbest{\textbf{{1.00}}} & \poss{0.90} & \poss{0.77} & \bestbest{\textbf{1.00}} & \bestbest{\textbf{1.00}} & \negg{0.83} & \negg{0.90} \\
barman & 0.25 & \negg{0.00} & \textbf{\poss{0.31}} & \negg{0.00} & 0.00 & 0.00 & \textbf{\poss{0.06}} & 0.00 & 0.12 & \bestbest{\textbf{{0.69}}} & \poss{0.25} & \poss{0.25} \\
blocks & \bestbest{\textbf{1.00}} & \negg{0.94} & \negg{0.97} & \bestbest{\textbf{1.00}} & \bestbest{\textbf{1.00}} & \bestbest{\textbf{1.00}} & \bestbest{\textbf{1.00}} & \bestbest{\textbf{1.00}} & \bestbest{\textbf{1.00}} & \bestbest{\textbf{1.00}} & \bestbest{\textbf{1.00}} & \bestbest{\textbf{1.00}} \\
caldera & \textbf{0.70} & \negg{0.57} & \negg{0.61} & \negg{0.43} & \bestbest{\textbf{0.78}} & \bestbest{\textbf{0.78}} & \negg{0.74} & \negg{0.70} & 0.52 & \poss{0.70} & 0.52 & \textbf{\poss{0.74}} \\
cavediving & \bestbest{\textbf{1.00}} & \bestbest{\textbf{1.00}} & \bestbest{\textbf{1.00}} & \bestbest{\textbf{1.00}} & \bestbest{\textbf{1.00}} & \negg{0.86} & \bestbest{\textbf{1.00}} & \bestbest{\textbf{1.00}} & \bestbest{\textbf{1.00}} & \bestbest{\textbf{1.00}} & \bestbest{\textbf{1.00}} & \bestbest{\textbf{1.00}} \\
childsnack & 0.00 & 0.00 & 0.00 & 0.00 & 0.50 & \negg{0.25} & \negg{0.25} & \bestbest{\textbf{{1.00}}} & 0.00 & 0.00 & \poss{0.25} & \textbf{\poss{0.75}} \\
citycar & 0.00 & 0.00 & \poss{0.25} & \textbf{\poss{0.40}} & \bestbest{\textbf{1.00}} & \negg{0.75} & \bestbest{\textbf{1.00}} & \bestbest{\textbf{1.00}} & 0.00 & \poss{0.40} & \textbf{\poss{0.95}} & \textbf{\poss{0.95}} \\
data & 0.17 & 0.17 & \bestbest{\textbf{{0.67}}} & \poss{0.50} & 0.17 & \textbf{\poss{0.50}} & \textbf{\poss{0.50}} & \textbf{\poss{0.50}} & \bestbest{\textbf{0.67}} & \negg{0.50} & \negg{0.33} & \bestbest{\textbf{0.67}} \\
depot & 0.67 & \poss{0.76} & \negg{0.62} & \textbf{\poss{0.81}} & 0.62 & \poss{0.90} & \bestbest{\textbf{{1.00}}} & \bestbest{\textbf{{1.00}}} & 0.71 & \poss{0.81} & \poss{0.90} & \bestbest{\textbf{{1.00}}} \\
driverlog & \textbf{0.95} & \textbf{0.95} & \negg{0.90} & \textbf{0.95} & 0.90 & \textbf{\poss{0.95}} & \negg{0.85} & \textbf{\poss{0.95}} & 0.85 & \poss{0.95} & \poss{0.90} & \bestbest{\textbf{{1.00}}} \\
elevators & 0.85 & \bestbest{\textbf{{1.00}}} & \negg{0.62} & \negg{0.74} & 0.38 & \textbf{\poss{0.77}} & \textbf{\poss{0.77}} & \textbf{\poss{0.77}} & 0.28 & \negg{0.21} & \poss{0.41} & \textbf{\poss{0.49}} \\
floortile & 0.00 & 0.00 & \textbf{\poss{0.27}} & \poss{0.20} & \bestbest{\textbf{0.67}} & \negg{0.60} & \negg{0.53} & \negg{0.53} & \bestbest{\textbf{0.67}} & \negg{0.40} & \bestbest{\textbf{0.67}} & \bestbest{\textbf{0.67}} \\
folding & 0.73 & \bestbest{\textbf{{0.82}}} & \negg{0.55} & \negg{0.55} & 0.00 & 0.00 & \textbf{\poss{0.27}} & \textbf{\poss{0.27}} & 0.00 & \poss{0.18} & \textbf{\poss{0.27}} & \textbf{\poss{0.27}} \\
freecell & 0.58 & \textbf{\poss{0.71}} & \negg{0.54} & \poss{0.66} & \bestbest{\textbf{1.00}} & \negg{0.96} & \negg{0.94} & \negg{0.99} & \textbf{0.99} & \textbf{0.99} & \negg{0.95} & \negg{0.96} \\
ged & \bestbest{\textbf{1.00}} & \bestbest{\textbf{1.00}} & \negg{0.95} & \negg{0.95} & 0.00 & \textbf{\poss{0.55}} & \poss{0.40} & \poss{0.45} & 0.00 & 0.00 & \textbf{\poss{0.15}} & \poss{0.05} \\
grid & \textbf{0.60} & \textbf{0.60} & \textbf{0.60} & \textbf{0.60} & 0.60 & \bestbest{\textbf{{1.00}}} & \negg{0.40} & \bestbest{\textbf{{1.00}}} & 0.80 & \bestbest{\textbf{{1.00}}} & \negg{0.40} & \bestbest{\textbf{{1.00}}} \\
gripper & \bestbest{\textbf{1.00}} & \bestbest{\textbf{1.00}} & \bestbest{\textbf{1.00}} & \bestbest{\textbf{1.00}} & \bestbest{\textbf{1.00}} & \bestbest{\textbf{1.00}} & \bestbest{\textbf{1.00}} & \bestbest{\textbf{1.00}} & \bestbest{\textbf{1.00}} & \bestbest{\textbf{1.00}} & \bestbest{\textbf{1.00}} & \bestbest{\textbf{1.00}} \\
hiking & \textbf{0.10} & \textbf{0.10} & \textbf{0.10} & \textbf{0.10} & 0.95 & \negg{0.90} & \bestbest{\textbf{{1.00}}} & 0.95 & \bestbest{\textbf{1.00}} & \bestbest{\textbf{1.00}} & \bestbest{\textbf{1.00}} & \bestbest{\textbf{1.00}} \\
labyrinth & \bestbest{\textbf{1.00}} & \negg{0.50} & \negg{0.75} & \negg{0.50} & 0.00 & 0.00 & 0.00 & 0.00 & 0.00 & 0.00 & 0.00 & 0.00 \\
logistics & 0.61 & \textbf{\poss{0.63}} & \negg{0.60} & \textbf{\poss{0.63}} & \textbf{0.93} & \negg{0.84} & \negg{0.89} & \negg{0.91} & \bestbest{\textbf{0.96}} & \negg{0.88} & \negg{0.93} & \negg{0.91} \\
maintenance & 0.82 & \textbf{\poss{0.94}} & \negg{0.41} & \negg{0.53} & 0.94 & \bestbest{\textbf{{1.00}}} & \negg{0.59} & \negg{0.76} & \textbf{0.65} & \negg{0.41} & \negg{0.35} & \negg{0.35} \\
miconic & \textbf{0.85} & \textbf{0.85} & \negg{0.83} & \negg{0.83} & \bestbest{\textbf{1.00}} & \negg{0.99} & \negg{0.97} & \bestbest{\textbf{1.00}} & \bestbest{\textbf{1.00}} & \bestbest{\textbf{1.00}} & \bestbest{\textbf{1.00}} & \negg{0.99} \\
movie & \bestbest{\textbf{1.00}} & \bestbest{\textbf{1.00}} & \bestbest{\textbf{1.00}} & \bestbest{\textbf{1.00}} & \bestbest{\textbf{1.00}} & \bestbest{\textbf{1.00}} & \bestbest{\textbf{1.00}} & \bestbest{\textbf{1.00}} & \bestbest{\textbf{1.00}} & \bestbest{\textbf{1.00}} & \bestbest{\textbf{1.00}} & \bestbest{\textbf{1.00}} \\
mprime & 0.60 & \textbf{\poss{0.63}} & 0.60 & 0.60 & 0.89 & \bestbest{\textbf{{1.00}}} & \bestbest{\textbf{{1.00}}} & \bestbest{\textbf{{1.00}}} & 0.86 & \bestbest{\textbf{{1.00}}} & \bestbest{\textbf{{1.00}}} & \bestbest{\textbf{{1.00}}} \\
mystery & 0.79 & 0.79 & \negg{0.74} & \textbf{\poss{0.84}} & 0.95 & \bestbest{\textbf{{1.00}}} & \bestbest{\textbf{{1.00}}} & \bestbest{\textbf{{1.00}}} & 0.89 & \bestbest{\textbf{{1.00}}} & \bestbest{\textbf{{1.00}}} & \bestbest{\textbf{{1.00}}} \\
nomystery & 0.33 & \textbf{\poss{0.56}} & \poss{0.50} & \poss{0.44} & 0.33 & \textbf{\poss{0.67}} & 0.33 & \poss{0.61} & 0.50 & \bestbest{\textbf{{0.94}}} & \negg{0.44} & \poss{0.83} \\
nurikabe & \bestbest{\textbf{0.93}} & \bestbest{\textbf{0.93}} & \negg{0.86} & \negg{0.86} & 0.64 & \negg{0.50} & \negg{0.57} & \textbf{\poss{0.71}} & 0.50 & \textbf{\poss{0.79}} & 0.50 & \poss{0.64} \\
openstacks & \bestbest{\textbf{0.70}} & \negg{0.60} & \negg{0.58} & \negg{0.49} & 0.51 & \negg{0.49} & \textbf{\poss{0.60}} & \textbf{\poss{0.60}} & 0.63 & 0.63 & \textbf{\poss{0.67}} & \textbf{\poss{0.67}} \\
optical & 0.24 & \negg{0.19} & \textbf{\poss{0.86}} & \textbf{\poss{0.86}} & 0.29 & \bestbest{\textbf{{1.00}}} & \poss{0.76} & \poss{0.71} & 0.19 & \textbf{\poss{0.62}} & \poss{0.48} & \poss{0.33} \\
parking & 0.00 & 0.00 & 0.00 & 0.00 & 0.28 & \poss{0.44} & \poss{0.58} & \textbf{\poss{0.64}} & 0.64 & \negg{0.14} & \poss{0.81} & \bestbest{\textbf{{0.86}}} \\
pegsol & \bestbest{\textbf{1.00}} & \negg{0.92} & \negg{0.88} & \negg{0.88} & \bestbest{\textbf{1.00}} & \negg{0.88} & \negg{0.92} & \negg{0.96} & \bestbest{\textbf{1.00}} & \negg{0.92} & \negg{0.96} & \negg{0.96} \\
philosophers & 0.42 & \textbf{\poss{0.54}} & \negg{0.23} & \negg{0.25} & \textbf{0.98} & \negg{0.58} & \negg{0.50} & \negg{0.46} & \bestbest{\textbf{1.00}} & \bestbest{\textbf{1.00}} & \negg{0.35} & \negg{0.50} \\
pipesworld & 0.71 & \textbf{\poss{0.76}} & \negg{0.60} & \negg{0.70} & 0.50 & \poss{0.93} & \poss{0.88} & \bestbest{\textbf{{0.98}}} & 0.58 & \poss{0.94} & \poss{0.86} & \bestbest{\textbf{{0.98}}} \\
psr & \bestbest{\textbf{1.00}} & \negg{0.86} & \negg{0.75} & \negg{0.75} & \textbf{0.89} & \negg{0.85} & \negg{0.75} & \negg{0.75} & 0.63 & \negg{0.61} & \textbf{\poss{0.79}} & \textbf{\poss{0.79}} \\
recharging & \bestbest{\textbf{0.86}} & \negg{0.79} & \negg{0.57} & \negg{0.50} & 0.64 & \textbf{\poss{0.71}} & \negg{0.57} & \negg{0.50} & 0.57 & \textbf{\poss{0.79}} & 0.57 & \negg{0.50} \\
ricochet & \textbf{0.29} & \negg{0.00} & \negg{0.14} & \negg{0.00} & \textbf{0.29} & \negg{0.21} & \negg{0.00} & \negg{0.07} & \bestbest{\textbf{0.86}} & \negg{0.36} & \negg{0.14} & \negg{0.07} \\
rovers & \textbf{0.66} & \textbf{0.66} & \negg{0.53} & \negg{0.59} & 0.81 & \textbf{\poss{0.94}} & 0.81 & \poss{0.88} & 0.75 & \bestbest{\textbf{{0.97}}} & \poss{0.94} & \poss{0.91} \\
rubiks & 0.20 & \textbf{\poss{0.30}} & 0.20 & 0.20 & 0.20 & \textbf{\poss{0.40}} & \poss{0.25} & \poss{0.25} & \bestbest{\textbf{1.00}} & \negg{0.45} & \negg{0.30} & \negg{0.30} \\
satellite & 0.47 & \textbf{\poss{0.60}} & \negg{0.43} & 0.47 & \bestbest{\textbf{1.00}} & \negg{0.87} & \negg{0.93} & \negg{0.90} & \textbf{0.90} & \negg{0.73} & \negg{0.87} & \negg{0.83} \\
scanalyzer & \bestbest{\textbf{1.00}} & \negg{0.96} & \negg{0.72} & \negg{0.96} & \bestbest{\textbf{1.00}} & \negg{0.96} & \negg{0.96} & \negg{0.92} & \textbf{0.92} & \textbf{0.92} & \negg{0.88} & \textbf{0.92} \\
schedule & 0.52 & \negg{0.42} & \textbf{\poss{0.71}} & \poss{0.66} & 0.13 & \poss{0.52} & \textbf{\poss{0.86}} & \poss{0.83} & 0.21 & \poss{0.52} & \bestbest{\textbf{{0.96}}} & \poss{0.94} \\
settlers & 0.00 & 0.00 & 0.00 & 0.00 & \bestbest{\textbf{0.71}} & \negg{0.43} & \negg{0.43} & \negg{0.57} & 0.00 & 0.00 & \poss{0.14} & \textbf{\poss{0.29}} \\
slitherlink & 0.00 & 0.00 & 0.00 & 0.00 & 0.00 & 0.00 & 0.00 & 0.00 & 0.00 & 0.00 & 0.00 & 0.00 \\
snake & 0.36 & \textbf{\poss{0.55}} & \negg{0.27} & \negg{0.18} & 0.27 & \poss{0.36} & \bestbest{\textbf{{1.00}}} & \poss{0.82} & 0.45 & \poss{0.55} & \poss{0.82} & \textbf{\poss{0.91}} \\
sokoban & \textbf{0.75} & \negg{0.58} & \negg{0.31} & \negg{0.31} & \bestbest{\textbf{0.96}} & \negg{0.94} & \negg{0.48} & \negg{0.79} & \bestbest{\textbf{0.96}} & \negg{0.88} & \negg{0.46} & \negg{0.71} \\
spider & 0.58 & \textbf{\poss{0.63}} & \negg{0.53} & \negg{0.47} & 0.37 & \negg{0.11} & \textbf{\poss{0.63}} & \poss{0.58} & 0.63 & \poss{0.74} & \poss{0.74} & \bestbest{\textbf{{0.95}}} \\
storage & 0.64 & \textbf{\poss{0.71}} & \negg{0.57} & \textbf{\poss{0.71}} & 0.57 & \poss{0.93} & \poss{0.64} & \bestbest{\textbf{{0.96}}} & 0.64 & \textbf{\poss{0.93}} & 0.64 & \poss{0.86} \\
termes & \textbf{0.75} & \negg{0.08} & \negg{0.00} & \negg{0.00} & \textbf{0.33} & \negg{0.00} & \negg{0.08} & \negg{0.08} & \bestbest{\textbf{1.00}} & \negg{0.08} & \negg{0.00} & \negg{0.08} \\
tetris & \bestbest{\textbf{0.95}} & \negg{0.80} & \negg{0.80} & \negg{0.65} & \textbf{0.80} & \negg{0.70} & \negg{0.50} & \negg{0.60} & 0.20 & \poss{0.65} & 0.20 & \textbf{\poss{0.75}} \\
thoughtful & \textbf{0.25} & \textbf{0.25} & \textbf{0.25} & \textbf{0.25} & 0.65 & \bestbest{\textbf{{0.95}}} & \poss{0.75} & \poss{0.80} & 0.40 & \poss{0.85} & \poss{0.65} & \bestbest{\textbf{{0.95}}} \\
tidybot & 0.95 & \bestbest{\textbf{{1.00}}} & 0.95 & \bestbest{\textbf{{1.00}}} & 0.85 & 0.85 & \textbf{\poss{0.90}} & 0.85 & 0.80 & \poss{0.85} & \textbf{\poss{0.90}} & \textbf{\poss{0.90}} \\
tpp & 0.46 & \textbf{\poss{0.54}} & \textbf{\poss{0.54}} & \textbf{\poss{0.54}} & 0.75 & \bestbest{\textbf{{1.00}}} & \negg{0.71} & 0.75 & 0.68 & \bestbest{\textbf{{1.00}}} & \poss{0.71} & 0.68 \\
transport & \bestbest{\textbf{0.87}} & \negg{0.57} & \negg{0.57} & \negg{0.55} & 0.77 & \negg{0.53} & \textbf{\poss{0.85}} & 0.77 & 0.26 & \poss{0.57} & \poss{0.45} & \textbf{\poss{0.72}} \\
trucks & 0.36 & \poss{0.48} & \textbf{\poss{0.64}} & \poss{0.52} & 0.60 & \negg{0.56} & \poss{0.80} & \textbf{\poss{0.84}} & 0.56 & \poss{0.72} & \poss{0.80} & \bestbest{\textbf{{0.92}}} \\
visitall & \bestbest{\textbf{1.00}} & \negg{0.53} & \negg{0.93} & \negg{0.30} & 0.08 & \poss{0.18} & \poss{0.12} & \textbf{\poss{0.25}} & 0.08 & \textbf{\poss{0.40}} & \negg{0.03} & \poss{0.35} \\
woodworking & 0.30 & \textbf{\poss{0.50}} & \poss{0.38} & \poss{0.34} & 0.70 & \poss{0.96} & \poss{0.96} & \textbf{\poss{0.98}} & 0.82 & \poss{0.92} & \bestbest{\textbf{{1.00}}} & \bestbest{\textbf{{1.00}}} \\
zenotravel & \bestbest{\textbf{1.00}} & \bestbest{\textbf{1.00}} & \bestbest{\textbf{1.00}} & \bestbest{\textbf{1.00}} & \bestbest{\textbf{1.00}} & \bestbest{\textbf{1.00}} & \bestbest{\textbf{1.00}} & \bestbest{\textbf{1.00}} & \bestbest{\textbf{1.00}} & \bestbest{\textbf{1.00}} & \bestbest{\textbf{1.00}} & \bestbest{\textbf{1.00}} \\
\midsplitterB
$\Sigma$ IPC & \textbf{33.80} & \negg{32.48} & \negg{31.51} & \negg{30.50} & 36.38 & \poss{39.36} & \poss{38.22} & \textbf{\poss{41.19}} & 35.79 & \poss{38.95} & \poss{36.13} & \bestbest{\textbf{{41.29}}} \\
\midsplitterB
blocksworld & 0.60 & \negg{0.13} & \bestbest{\textbf{{1.00}}} & \poss{0.87} & 0.00 & \poss{0.07} & \textbf{\poss{0.20}} & \textbf{\poss{0.20}} & 0.13 & \negg{0.07} & \textbf{\poss{0.20}} & 0.13 \\
childsnack & \textbf{0.29} & \negg{0.03} & \negg{0.26} & \negg{0.22} & 0.35 & 0.35 & \textbf{\poss{0.98}} & \poss{0.95} & 0.35 & 0.35 & \poss{0.95} & \bestbest{\textbf{{1.00}}} \\
genome & \bestbest{\textbf{1.00}} & \negg{0.81} & \negg{0.83} & \negg{0.84} & 0.32 & \textbf{\poss{0.34}} & \negg{0.29} & \negg{0.26} & 0.14 & 0.14 & \textbf{\poss{0.20}} & \poss{0.18} \\
labyrinth & \bestbest{\textbf{1.00}} & \bestbest{\textbf{1.00}} & \bestbest{\textbf{1.00}} & \bestbest{\textbf{1.00}} & \bestbest{\textbf{1.00}} & \bestbest{\textbf{1.00}} & \bestbest{\textbf{1.00}} & \bestbest{\textbf{1.00}} & \bestbest{\textbf{1.00}} & \bestbest{\textbf{1.00}} & \bestbest{\textbf{1.00}} & \bestbest{\textbf{1.00}} \\
logistics & 0.09 & \negg{0.03} & \bestbest{\textbf{{0.97}}} & 0.09 & \textbf{0.03} & \negg{0.00} & \negg{0.00} & \negg{0.00} & 0.00 & 0.00 & 0.00 & \textbf{\poss{0.06}} \\
organic & 0.92 & \bestbest{\textbf{{0.98}}} & \poss{0.94} & \poss{0.94} & 0.65 & \poss{0.67} & \negg{0.62} & \textbf{\poss{0.71}} & 0.69 & \textbf{\poss{0.71}} & \negg{0.67} & \negg{0.67} \\
pipesworld & 0.55 & \poss{0.74} & 0.55 & \bestbest{\textbf{{0.81}}} & 0.45 & \textbf{\poss{0.67}} & \poss{0.64} & \poss{0.62} & 0.40 & \textbf{\poss{0.71}} & \poss{0.52} & \poss{0.69} \\
rovers & 0.00 & 0.00 & \textbf{\poss{0.18}} & \poss{0.09} & \textbf{0.64} & \negg{0.00} & \negg{0.27} & \negg{0.27} & \bestbest{\textbf{1.00}} & \negg{0.00} & \negg{0.27} & \negg{0.27} \\
visitall & 0.48 & \bestbest{\textbf{{0.87}}} & \poss{0.54} & \poss{0.52} & \textbf{0.75} & \negg{0.54} & \negg{0.66} & \negg{0.70} & \textbf{0.73} & \negg{0.58} & \negg{0.66} & \negg{0.68} \\
\midsplitterB
$\Sigma$ HTG & 4.93 & \negg{4.59} & \bestbest{\textbf{{6.27}}} & \poss{5.37} & 4.19 & \negg{3.63} & \poss{4.67} & \textbf{\poss{4.72}} & 4.46 & \negg{3.56} & \poss{4.48} & \textbf{\poss{4.69}} \\
\midsplitterB
$\Sigma$ & \textbf{38.73} & \negg{37.07} & \negg{37.78} & \negg{35.87} & 40.57 & \poss{43.00} & \poss{42.90} & \textbf{\poss{45.91}} & 40.25 & \poss{42.51} & \poss{40.61} & \bestbest{\textbf{{45.97}}} \\
    \bottomrule
\end{tabularx}

  \caption{Normalised coverage ($\uparrow$) of heuristics for single-queue GBFS across different domains. Green/red cells indicate that a novelty heuristic (\atoms, \wl, \atomswl) performs better/worse than their base heuristic (\hgc, \hadd, \hff). Blue cells indicate the best value per row. Bold font indicates the best score in each local heuristic group.}
  \label{tab:cov_norm}
\end{table*}

\clearpage
\section{Expansion Results}\label{sec:expansions}

\begin{table}[ht!]
  \centering
  \tabcolsep 3pt
  \renewcommand{\poss}[1]{#1}
  \renewcommand{\negg}[1]{#1}
  \newcommand{\midsplitterB}{
    \cmidrule{1-1}
    \cmidrule(l){2-3}
    \cmidrule(l){4-5}
    \cmidrule(l){6-7}
    \cmidrule(l){8-9}
  }
  \tiny
  \begin{tabularx}{\columnwidth}{l | *{2}{Y} | *{2}{Y} | *{2}{Y} | *{2}{Y}}
    \toprule
    Domain 
    & Base
    & \atoms
    & Base
    & \wl
    & Base
    & \atomswl
    & \atoms
    & \wl
    \\
    \midsplitterB
    agricola & 2 & \textbf{\possA{5}} & \textbf{6} & \negg{0} & \textbf{3} & \textbf{3} & \textbf{5} & \negg{0} \\
assembly & \textbf{30} & \negg{0} & \textbf{29} & \negg{1} & \textbf{29} & \negg{1} & \textbf{25} & \negg{5} \\
barman & 1 & \textbf{\possA{11}} & 2 & \textbf{\possA{4}} & 2 & \textbf{\possA{4}} & \textbf{9} & \negg{4} \\
blocks & 15 & \textbf{\possA{20}} & 12 & \textbf{\possA{23}} & 11 & \textbf{\possA{23}} & 9 & \textbf{\possA{24}} \\
caldera & 8 & \textbf{\possA{9}} & \textbf{7} & \textbf{7} & 6 & \textbf{\possA{10}} & 6 & \textbf{\possA{10}} \\
cavediving & 0 & \textbf{\possA{7}} & 0 & \textbf{\possA{7}} & 0 & \textbf{\possA{7}} & 0 & \textbf{\possA{7}} \\
childsnack & 0 & 0 & 0 & \textbf{\possA{1}} & 0 & \textbf{\possA{3}} & 0 & \textbf{\possA{1}} \\
citycar & 0 & \textbf{\possA{8}} & 0 & \textbf{\possA{19}} & 0 & \textbf{\possA{19}} & 1 & \textbf{\possA{18}} \\
data & 1 & \textbf{\possA{3}} & \textbf{2} & \textbf{2} & 1 & \textbf{\possA{4}} & \textbf{3} & \negg{0} \\
depot & 8 & \textbf{\possA{10}} & 5 & \textbf{\possA{15}} & 3 & \textbf{\possA{18}} & 3 & \textbf{\possA{17}} \\
driverlog & \textbf{16} & \negg{3} & \textbf{17} & \negg{3} & \textbf{16} & \negg{4} & 5 & \textbf{\possA{15}} \\
elevators & 5 & \textbf{\possA{6}} & 0 & \textbf{\possA{16}} & 1 & \textbf{\possA{18}} & 2 & \textbf{\possA{14}} \\
floortile & \textbf{10} & \negg{0} & 4 & \textbf{\possA{7}} & \textbf{5} & \textbf{5} & 0 & \textbf{\possA{10}} \\
folding & 0 & \textbf{\possA{2}} & 0 & \textbf{\possA{3}} & 0 & \textbf{\possA{3}} & \textbf{2} & \textbf{2} \\
freecell & \textbf{61} & \negg{17} & \textbf{62} & \negg{17} & \textbf{57} & \negg{22} & \textbf{48} & \negg{32} \\
ged & 0 & 0 & 0 & \textbf{\possA{3}} & 0 & \textbf{\possA{1}} & 0 & \textbf{\possA{3}} \\
grid & 2 & \textbf{\possA{3}} & \textbf{4} & \negg{0} & 2 & \textbf{\possA{3}} & \textbf{4} & \negg{1} \\
gripper & \textbf{20} & \negg{0} & 0 & \textbf{\possA{20}} & 0 & \textbf{\possA{20}} & 0 & \textbf{\possA{20}} \\
hiking & \textbf{10} & \textbf{10} & 2 & \textbf{\possA{18}} & 2 & \textbf{\possA{18}} & 4 & \textbf{\possA{16}} \\
labyrinth & 0 & 0 & 0 & 0 & 0 & 0 & 0 & 0 \\
logistics & \textbf{53} & \negg{1} & \textbf{54} & \negg{2} & \textbf{53} & \negg{2} & 1 & \textbf{\possA{51}} \\
maintenance & \textbf{11} & \negg{3} & \textbf{11} & \negg{2} & \textbf{11} & \negg{2} & 1 & \textbf{\possA{6}} \\
miconic & \textbf{404} & \negg{17} & \textbf{397} & \negg{20} & \textbf{402} & \negg{19} & 138 & \textbf{\possA{280}} \\
movie & \textbf{30} & \negg{0} & \textbf{30} & \negg{0} & \textbf{30} & \negg{0} & 0 & 0 \\
mprime & 6 & \textbf{\possA{15}} & 9 & \textbf{\possA{13}} & 9 & \textbf{\possA{13}} & \textbf{11} & \negg{5} \\
mystery & \textbf{4} & \textbf{4} & \textbf{5} & \negg{3} & \textbf{4} & \negg{3} & \textbf{4} & \negg{1} \\
nomystery & 4 & \textbf{\possA{13}} & \textbf{6} & \negg{4} & 5 & \textbf{\possA{10}} & \textbf{14} & \negg{3} \\
nurikabe & 2 & \textbf{\possA{9}} & \textbf{5} & \negg{3} & 3 & \textbf{\possA{6}} & \textbf{8} & \negg{3} \\
openstacks & 2 & \textbf{\possA{34}} & 0 & \textbf{\possA{38}} & 0 & \textbf{\possA{38}} & 8 & \textbf{\possA{29}} \\
optical & 0 & \textbf{\possA{13}} & 0 & \textbf{\possA{10}} & 0 & \textbf{\possA{7}} & \textbf{10} & \negg{2} \\
parking & \textbf{22} & \negg{1} & 9 & \textbf{\possA{26}} & 5 & \textbf{\possA{26}} & 1 & \textbf{\possA{29}} \\
pegsol & \textbf{45} & \negg{5} & \textbf{33} & \negg{17} & \textbf{36} & \negg{14} & 15 & \textbf{\possA{32}} \\
philosophers & 18 & \textbf{\possA{30}} & \textbf{48} & \negg{0} & \textbf{48} & \negg{0} & \textbf{47} & \negg{1} \\
pipesworld & 18 & \textbf{\possA{60}} & 14 & \textbf{\possA{56}} & 15 & \textbf{\possA{65}} & 29 & \textbf{\possA{50}} \\
psr & \textbf{38} & \negg{21} & 24 & \textbf{\possA{47}} & 27 & \textbf{\possA{46}} & 22 & \textbf{\possA{51}} \\
recharging & 5 & \textbf{\possA{6}} & 4 & \textbf{\possA{6}} & 4 & \textbf{\possA{5}} & \textbf{7} & \negg{5} \\
ricochet & \textbf{10} & \negg{2} & \textbf{11} & \negg{2} & \textbf{11} & \negg{1} & \textbf{5} & \negg{1} \\
rovers & \textbf{16} & \negg{14} & \textbf{16} & \negg{12} & 13 & \textbf{\possA{15}} & 8 & \textbf{\possA{23}} \\
rubiks & \textbf{14} & \negg{2} & \textbf{19} & \negg{1} & \textbf{15} & \negg{1} & \textbf{9} & \negg{0} \\
satellite & \textbf{25} & \negg{3} & \textbf{19} & \negg{8} & \textbf{22} & \negg{5} & 1 & \textbf{\possA{25}} \\
scanalyzer & \textbf{37} & \negg{10} & \textbf{27} & \negg{23} & 23 & \textbf{\possA{24}} & 19 & \textbf{\possA{27}} \\
schedule & 8 & \textbf{\possA{65}} & 3 & \textbf{\possA{134}} & 3 & \textbf{\possA{131}} & 24 & \textbf{\possA{113}} \\
settlers & 0 & 0 & 0 & \textbf{\possA{1}} & 0 & \textbf{\possA{2}} & 0 & \textbf{\possA{1}} \\
snake & 2 & \textbf{\possA{5}} & 0 & \textbf{\possA{9}} & 0 & \textbf{\possA{10}} & 2 & \textbf{\possA{7}} \\
sokoban & \textbf{41} & \negg{7} & \textbf{43} & \negg{1} & \textbf{43} & \negg{5} & \textbf{37} & \negg{5} \\
spider & \textbf{9} & \negg{7} & 7 & \textbf{\possA{10}} & 6 & \textbf{\possA{13}} & 5 & \textbf{\possA{12}} \\
storage & 8 & \textbf{\possA{14}} & \textbf{12} & \negg{3} & 7 & \textbf{\possA{13}} & \textbf{18} & \negg{4} \\
termes & \textbf{11} & \negg{1} & \textbf{12} & \negg{0} & \textbf{11} & \negg{1} & \textbf{1} & \negg{0} \\
tetris & 1 & \textbf{\possA{13}} & \textbf{4} & \negg{2} & 1 & \textbf{\possA{14}} & \textbf{13} & \negg{0} \\
thoughtful & 8 & \textbf{\possA{9}} & \textbf{8} & \negg{6} & 8 & \textbf{\possA{11}} & \textbf{14} & \negg{3} \\
tidybot & \textbf{12} & \negg{5} & \textbf{13} & \negg{5} & \textbf{12} & \negg{6} & 6 & \textbf{\possA{12}} \\
tpp & 3 & \textbf{\possA{20}} & 5 & \textbf{\possA{11}} & 5 & \textbf{\possA{11}} & \textbf{21} & \negg{3} \\
transport & 5 & \textbf{\possA{21}} & 2 & \textbf{\possA{18}} & 1 & \textbf{\possA{32}} & 12 & \textbf{\possA{14}} \\
trucks & 9 & \textbf{\possA{10}} & 6 & \textbf{\possA{14}} & 5 & \textbf{\possA{18}} & 4 & \textbf{\possA{18}} \\
visitall & 0 & \textbf{\possA{16}} & \textbf{3} & \negg{0} & 0 & \textbf{\possA{14}} & \textbf{16} & \negg{0} \\
woodworking & \textbf{35} & \negg{12} & \textbf{33} & \negg{15} & \textbf{33} & \negg{15} & 6 & \textbf{\possA{39}} \\
zenotravel & \textbf{17} & \negg{0} & \textbf{14} & \negg{5} & \textbf{11} & \negg{6} & 3 & \textbf{\possA{16}} \\
\midsplitterB
$\Sigma$ IPC & \textbf{1122} & \negg{582} & \textbf{1058} & \negg{693} & \textbf{1020} & \negg{790} & 666 & \textbf{\possA{1070}} \\
\midsplitterB
blocksworld & \textbf{2} & \negg{0} & \textbf{1} & \textbf{1} & \textbf{1} & \negg{0} & 0 & \textbf{\possA{3}} \\
childsnack & 9 & \textbf{\possA{18}} & 1 & \textbf{\possA{61}} & 1 & \textbf{\possA{64}} & 0 & \textbf{\possA{62}} \\
genome & \textbf{28} & \negg{16} & 24 & \textbf{\possA{50}} & 23 & \textbf{\possA{45}} & 21 & \textbf{\possA{52}} \\
labyrinth & 2 & \textbf{\possA{7}} & \textbf{31} & \negg{9} & 0 & \textbf{\possA{9}} & \textbf{31} & \negg{9} \\
logistics & 0 & 0 & 0 & 0 & 0 & \textbf{\possA{2}} & 0 & 0 \\
organic & \textbf{3} & \negg{1} & \textbf{4} & \negg{1} & \textbf{4} & \negg{1} & \textbf{2} & \negg{1} \\
pipesworld & 10 & \textbf{\possA{19}} & 10 & \textbf{\possA{12}} & 9 & \textbf{\possA{20}} & 13 & \textbf{\possA{17}} \\
rovers & \textbf{11} & \negg{0} & \textbf{9} & \negg{0} & \textbf{10} & \negg{0} & 0 & \textbf{\possA{3}} \\
visitall & \textbf{46} & \negg{10} & \textbf{31} & \negg{28} & 24 & \textbf{\possA{33}} & 11 & \textbf{\possA{41}} \\
\midsplitterB
$\Sigma$ HTG & \textbf{111} & \negg{71} & 111 & \textbf{\possA{162}} & 72 & \textbf{\possA{174}} & 78 & \textbf{\possA{188}} \\
\midsplitterB
$\Sigma$ & \textbf{1233} & \negg{653} & \textbf{1169} & \negg{855} & \textbf{1092} & \negg{964} & 744 & \textbf{\possA{1258}} \\
    \bottomrule
\end{tabularx}

  \caption{Problems per domain where a column expands fewer nodes than its paired column. The higher value in each pair is indicated in bold. The \mbox{(non-)novelty} extensions are over the \hff{} heuristic.}
  \label{tab:versus-exp}
\end{table}

\begin{table}[ht!]
  \centering
  \tabcolsep 3pt
  \renewcommand{\poss}[1]{#1}
  \renewcommand{\negg}[1]{#1}
  \newcommand{\midsplitterB}{
    \cmidrule{1-1}
    \cmidrule(l){2-3}
    \cmidrule(l){4-5}
    \cmidrule(l){6-7}
    \cmidrule(l){8-9}
  }
  \tiny
  \begin{tabularx}{\columnwidth}{l | *{2}{Y} | *{2}{Y} | *{2}{Y} | *{2}{Y}}
    \toprule
    Domain 
    & Base
    & \atoms
    & Base
    & \wl
    & Base
    & \atomswl
    & \atoms
    & \wl
    \\
    \midsplitterB
    agricola & 0.10 & \textbf{\possA{0.25}} & \textbf{0.30} & \negg{0.00} & \textbf{0.15} & \textbf{0.15} & \textbf{0.25} & \negg{0.00} \\
assembly & \textbf{1.00} & \negg{0.00} & \textbf{0.97} & \negg{0.03} & \textbf{0.97} & \negg{0.03} & \textbf{0.83} & \negg{0.17} \\
barman & 0.03 & \textbf{\possA{0.28}} & 0.05 & \textbf{\possA{0.10}} & 0.05 & \textbf{\possA{0.10}} & \textbf{0.23} & \negg{0.10} \\
blocks & 0.43 & \textbf{\possA{0.57}} & 0.34 & \textbf{\possA{0.66}} & 0.31 & \textbf{\possA{0.66}} & 0.26 & \textbf{\possA{0.69}} \\
caldera & 0.20 & \textbf{\possA{0.23}} & \textbf{0.17} & \textbf{0.17} & 0.15 & \textbf{\possA{0.25}} & 0.15 & \textbf{\possA{0.25}} \\
cavediving & 0.00 & \textbf{\possA{0.35}} & 0.00 & \textbf{\possA{0.35}} & 0.00 & \textbf{\possA{0.35}} & 0.00 & \textbf{\possA{0.35}} \\
childsnack & 0.00 & 0.00 & 0.00 & \textbf{\possA{0.05}} & 0.00 & \textbf{\possA{0.15}} & 0.00 & \textbf{\possA{0.05}} \\
citycar & 0.00 & \textbf{\possA{0.40}} & 0.00 & \textbf{\possA{0.95}} & 0.00 & \textbf{\possA{0.95}} & 0.05 & \textbf{\possA{0.90}} \\
data & 0.05 & \textbf{\possA{0.15}} & \textbf{0.10} & \textbf{0.10} & 0.05 & \textbf{\possA{0.20}} & \textbf{0.15} & \negg{0.00} \\
depot & 0.36 & \textbf{\possA{0.45}} & 0.23 & \textbf{\possA{0.68}} & 0.14 & \textbf{\possA{0.82}} & 0.14 & \textbf{\possA{0.77}} \\
driverlog & \textbf{0.80} & \negg{0.15} & \textbf{0.85} & \negg{0.15} & \textbf{0.80} & \negg{0.20} & 0.25 & \textbf{\possA{0.75}} \\
elevators & 0.10 & \textbf{\possA{0.12}} & 0.00 & \textbf{\possA{0.32}} & 0.02 & \textbf{\possA{0.36}} & 0.04 & \textbf{\possA{0.28}} \\
floortile & \textbf{0.25} & \negg{0.00} & 0.10 & \textbf{\possA{0.17}} & \textbf{0.12} & \textbf{0.12} & 0.00 & \textbf{\possA{0.25}} \\
folding & 0.00 & \textbf{\possA{0.10}} & 0.00 & \textbf{\possA{0.15}} & 0.00 & \textbf{\possA{0.15}} & \textbf{0.10} & \textbf{0.10} \\
freecell & \textbf{0.76} & \negg{0.21} & \textbf{0.78} & \negg{0.21} & \textbf{0.71} & \negg{0.28} & \textbf{0.60} & \negg{0.40} \\
ged & 0.00 & 0.00 & 0.00 & \textbf{\possA{0.15}} & 0.00 & \textbf{\possA{0.05}} & 0.00 & \textbf{\possA{0.15}} \\
grid & 0.40 & \textbf{\possA{0.60}} & \textbf{0.80} & \negg{0.00} & 0.40 & \textbf{\possA{0.60}} & \textbf{0.80} & \negg{0.20} \\
gripper & \textbf{1.00} & \negg{0.00} & 0.00 & \textbf{\possA{1.00}} & 0.00 & \textbf{\possA{1.00}} & 0.00 & \textbf{\possA{1.00}} \\
hiking & \textbf{0.50} & \textbf{0.50} & 0.10 & \textbf{\possA{0.90}} & 0.10 & \textbf{\possA{0.90}} & 0.20 & \textbf{\possA{0.80}} \\
labyrinth & 0.00 & 0.00 & 0.00 & 0.00 & 0.00 & 0.00 & 0.00 & 0.00 \\
logistics & \textbf{0.84} & \negg{0.02} & \textbf{0.86} & \negg{0.03} & \textbf{0.84} & \negg{0.03} & 0.02 & \textbf{\possA{0.81}} \\
maintenance & \textbf{0.55} & \negg{0.15} & \textbf{0.55} & \negg{0.10} & \textbf{0.55} & \negg{0.10} & 0.05 & \textbf{\possA{0.30}} \\
miconic & \textbf{0.90} & \negg{0.04} & \textbf{0.88} & \negg{0.04} & \textbf{0.89} & \negg{0.04} & 0.31 & \textbf{\possA{0.62}} \\
movie & \textbf{1.00} & \negg{0.00} & \textbf{1.00} & \negg{0.00} & \textbf{1.00} & \negg{0.00} & 0.00 & 0.00 \\
mprime & 0.17 & \textbf{\possA{0.43}} & 0.26 & \textbf{\possA{0.37}} & 0.26 & \textbf{\possA{0.37}} & \textbf{0.31} & \negg{0.14} \\
mystery & \textbf{0.13} & \textbf{0.13} & \textbf{0.17} & \negg{0.10} & \textbf{0.13} & \negg{0.10} & \textbf{0.13} & \negg{0.03} \\
nomystery & 0.20 & \textbf{\possA{0.65}} & \textbf{0.30} & \negg{0.20} & 0.25 & \textbf{\possA{0.50}} & \textbf{0.70} & \negg{0.15} \\
nurikabe & 0.10 & \textbf{\possA{0.45}} & \textbf{0.25} & \negg{0.15} & 0.15 & \textbf{\possA{0.30}} & \textbf{0.40} & \negg{0.15} \\
openstacks & 0.03 & \textbf{\possA{0.57}} & 0.00 & \textbf{\possA{0.63}} & 0.00 & \textbf{\possA{0.63}} & 0.13 & \textbf{\possA{0.48}} \\
optical & 0.00 & \textbf{\possA{0.27}} & 0.00 & \textbf{\possA{0.21}} & 0.00 & \textbf{\possA{0.15}} & \textbf{0.21} & \negg{0.04} \\
parking & \textbf{0.55} & \negg{0.03} & 0.23 & \textbf{\possA{0.65}} & 0.12 & \textbf{\possA{0.65}} & 0.03 & \textbf{\possA{0.72}} \\
pegsol & \textbf{0.90} & \negg{0.10} & \textbf{0.66} & \negg{0.34} & \textbf{0.72} & \negg{0.28} & 0.30 & \textbf{\possA{0.64}} \\
philosophers & 0.38 & \textbf{\possA{0.62}} & \textbf{1.00} & \negg{0.00} & \textbf{1.00} & \negg{0.00} & \textbf{0.98} & \negg{0.02} \\
pipesworld & 0.18 & \textbf{\possA{0.59}} & 0.14 & \textbf{\possA{0.55}} & 0.15 & \textbf{\possA{0.64}} & 0.29 & \textbf{\possA{0.50}} \\
psr & \textbf{0.38} & \negg{0.21} & 0.24 & \textbf{\possA{0.47}} & 0.27 & \textbf{\possA{0.46}} & 0.22 & \textbf{\possA{0.51}} \\
recharging & 0.25 & \textbf{\possA{0.30}} & 0.20 & \textbf{\possA{0.30}} & 0.20 & \textbf{\possA{0.25}} & \textbf{0.35} & \negg{0.25} \\
ricochet & \textbf{0.50} & \negg{0.10} & \textbf{0.55} & \negg{0.10} & \textbf{0.55} & \negg{0.05} & \textbf{0.25} & \negg{0.05} \\
rovers & \textbf{0.40} & \negg{0.35} & \textbf{0.40} & \negg{0.30} & 0.33 & \textbf{\possA{0.38}} & 0.20 & \textbf{\possA{0.57}} \\
rubiks & \textbf{0.70} & \negg{0.10} & \textbf{0.95} & \negg{0.05} & \textbf{0.75} & \negg{0.05} & \textbf{0.45} & \negg{0.00} \\
satellite & \textbf{0.69} & \negg{0.08} & \textbf{0.53} & \negg{0.22} & \textbf{0.61} & \negg{0.14} & 0.03 & \textbf{\possA{0.69}} \\
scanalyzer & \textbf{0.74} & \negg{0.20} & \textbf{0.54} & \negg{0.46} & 0.46 & \textbf{\possA{0.48}} & 0.38 & \textbf{\possA{0.54}} \\
schedule & 0.05 & \textbf{\possA{0.43}} & 0.02 & \textbf{\possA{0.89}} & 0.02 & \textbf{\possA{0.87}} & 0.16 & \textbf{\possA{0.75}} \\
settlers & 0.00 & 0.00 & 0.00 & \textbf{\possA{0.05}} & 0.00 & \textbf{\possA{0.10}} & 0.00 & \textbf{\possA{0.05}} \\
snake & 0.10 & \textbf{\possA{0.25}} & 0.00 & \textbf{\possA{0.45}} & 0.00 & \textbf{\possA{0.50}} & 0.10 & \textbf{\possA{0.35}} \\
sokoban & \textbf{0.82} & \negg{0.14} & \textbf{0.86} & \negg{0.02} & \textbf{0.86} & \negg{0.10} & \textbf{0.74} & \negg{0.10} \\
spider & \textbf{0.45} & \negg{0.35} & 0.35 & \textbf{\possA{0.50}} & 0.30 & \textbf{\possA{0.65}} & 0.25 & \textbf{\possA{0.60}} \\
storage & 0.27 & \textbf{\possA{0.47}} & \textbf{0.40} & \negg{0.10} & 0.23 & \textbf{\possA{0.43}} & \textbf{0.60} & \negg{0.13} \\
termes & \textbf{0.55} & \negg{0.05} & \textbf{0.60} & \negg{0.00} & \textbf{0.55} & \negg{0.05} & \textbf{0.05} & \negg{0.00} \\
tetris & 0.05 & \textbf{\possA{0.65}} & \textbf{0.20} & \negg{0.10} & 0.05 & \textbf{\possA{0.70}} & \textbf{0.65} & \negg{0.00} \\
thoughtful & 0.40 & \textbf{\possA{0.45}} & \textbf{0.40} & \negg{0.30} & 0.40 & \textbf{\possA{0.55}} & \textbf{0.70} & \negg{0.15} \\
tidybot & \textbf{0.60} & \negg{0.25} & \textbf{0.65} & \negg{0.25} & \textbf{0.60} & \negg{0.30} & 0.30 & \textbf{\possA{0.60}} \\
tpp & 0.10 & \textbf{\possA{0.67}} & 0.17 & \textbf{\possA{0.37}} & 0.17 & \textbf{\possA{0.37}} & \textbf{0.70} & \negg{0.10} \\
transport & 0.07 & \textbf{\possA{0.30}} & 0.03 & \textbf{\possA{0.26}} & 0.01 & \textbf{\possA{0.46}} & 0.17 & \textbf{\possA{0.20}} \\
trucks & 0.30 & \textbf{\possA{0.33}} & 0.20 & \textbf{\possA{0.47}} & 0.17 & \textbf{\possA{0.60}} & 0.13 & \textbf{\possA{0.60}} \\
visitall & 0.00 & \textbf{\possA{0.40}} & \textbf{0.07} & \negg{0.00} & 0.00 & \textbf{\possA{0.35}} & \textbf{0.40} & \negg{0.00} \\
woodworking & \textbf{0.70} & \negg{0.24} & \textbf{0.66} & \negg{0.30} & \textbf{0.66} & \negg{0.30} & 0.12 & \textbf{\possA{0.78}} \\
zenotravel & \textbf{0.85} & \negg{0.00} & \textbf{0.70} & \negg{0.25} & \textbf{0.55} & \negg{0.30} & 0.15 & \textbf{\possA{0.80}} \\
\midsplitterB
$\Sigma$ IPC & \textbf{20.88} & \negg{14.73} & \textbf{19.79} & \negg{15.74} & 17.78 & \textbf{\possA{19.55}} & 15.00 & \textbf{\possA{19.65}} \\
\midsplitterB
blocksworld & \textbf{0.05} & \negg{0.00} & \textbf{0.03} & \textbf{0.03} & \textbf{0.03} & \negg{0.00} & 0.00 & \textbf{\possA{0.07}} \\
childsnack & 0.06 & \textbf{\possA{0.12}} & 0.01 & \textbf{\possA{0.42}} & 0.01 & \textbf{\possA{0.44}} & 0.00 & \textbf{\possA{0.43}} \\
genome & \textbf{0.06} & \negg{0.03} & 0.05 & \textbf{\possA{0.11}} & 0.05 & \textbf{\possA{0.10}} & 0.04 & \textbf{\possA{0.11}} \\
labyrinth & 0.05 & \textbf{\possA{0.17}} & \textbf{0.78} & \negg{0.23} & 0.00 & \textbf{\possA{0.23}} & \textbf{0.78} & \negg{0.23} \\
logistics & 0.00 & 0.00 & 0.00 & 0.00 & 0.00 & \textbf{\possA{0.05}} & 0.00 & 0.00 \\
organic & \textbf{0.05} & \negg{0.02} & \textbf{0.07} & \negg{0.02} & \textbf{0.07} & \negg{0.02} & \textbf{0.04} & \negg{0.02} \\
pipesworld & 0.20 & \textbf{\possA{0.38}} & 0.20 & \textbf{\possA{0.24}} & 0.18 & \textbf{\possA{0.40}} & 0.26 & \textbf{\possA{0.34}} \\
rovers & \textbf{0.28} & \negg{0.00} & \textbf{0.23} & \negg{0.00} & \textbf{0.25} & \negg{0.00} & 0.00 & \textbf{\possA{0.07}} \\
visitall & \textbf{0.26} & \negg{0.06} & \textbf{0.17} & \negg{0.16} & 0.13 & \textbf{\possA{0.18}} & 0.06 & \textbf{\possA{0.23}} \\
\midsplitterB
$\Sigma$ HTG & \textbf{1.01} & \negg{0.79} & \textbf{1.53} & \negg{1.19} & 0.72 & \textbf{\possA{1.42}} & 1.18 & \textbf{\possA{1.50}} \\
\midsplitterB
$\Sigma$ & \textbf{21.89} & \negg{15.52} & \textbf{21.32} & \negg{16.93} & 18.50 & \textbf{\possA{20.97}} & 16.18 & \textbf{\possA{21.15}} \\
    \bottomrule
\end{tabularx}

  \caption{Percentage of problems per domain where a column expands fewer nodes than its paired column. The higher value in each pair is indicated in bold. The \mbox{(non-)novelty} extensions are over the \hff{} heuristic.}
  \label{tab:versus-exp-norm}
\end{table}
\fi

\end{document}